\documentclass{article}

\usepackage{microtype}
\usepackage{graphicx}
\usepackage{subcaption}
\usepackage{booktabs} 

\usepackage{hyperref}

\usepackage[preprint]{icml2026}

\usepackage{amsmath}
\usepackage{amssymb}
\usepackage{mathtools}
\usepackage{amsthm}

\usepackage{hyperref}
\usepackage{url}

\usepackage{graphicx}
\usepackage{epstopdf}
\usepackage{stmaryrd}
\usepackage{wrapfig}
\usepackage{nicefrac}
\usepackage{multirow}
\usepackage{mathtools} 
\usepackage{amsthm}
\usepackage{makecell} 
\usepackage{booktabs}
\usepackage[table,dvipsnames]{xcolor}

\usepackage[capitalize,noabbrev]{cleveref}

\theoremstyle{plain}
\newtheorem{theorem}{Theorem}[section]
\newtheorem{proposition}[theorem]{Proposition}
\newtheorem{lemma}[theorem]{Lemma}

\theoremstyle{definition}
\newtheorem{definition}[theorem]{Definition}

\theoremstyle{remark}

\DeclareMathOperator*{\argmin}{arg\,min}
\usepackage[textsize=tiny]{todonotes}

\begin{document}

\twocolumn[
  \icmltitle{Progressive Approximation in Deep Residual Networks: Theory and Validation}



  \icmlsetsymbol{equal}{*}

  \begin{icmlauthorlist}
    \icmlauthor{Wei Wang}{comp}
    \icmlauthor{Xiao-Yong Wei}{comp}
    \icmlauthor{Qing Li}{comp}
  \end{icmlauthorlist}


    \icmlaffiliation{comp}{Department of Computing, the Hong Kong Polytechnic University}
    \icmlcorrespondingauthor{Xiao-Yong Wei}{cs007.wei@polyu.edu.hk}


  \vskip 0.3in
]



\printAffiliationsAndNotice{}  

\begin{abstract}
The Universal Approximation Theorem (UAT) guarantees universal function approximation but does not explain how residual models distribute approximation across layers. We reframe residual networks as a layer-wise approximation process that builds an approximation trajectory from input to target, and prove the existence of progressive trajectories where error decreases monotonically with depth.
%
It reveals that residual networks can implement structured, step-by-step refinement rather than end-to-end (E2E) black-box mapping. 
Building on this, we propose Layer-wise Progressive Approximation (LPA), a theoretically grounded training principle that explicitly aligns each layer with its residual target to realize such trajectories. 
LPA is architecture-agnostic: we observe progressive behavior in residual FNNs, ResNets, and Transformers across tasks including complex surface fitting, image classification, and NLP with LLMs for generation and classification.
Crucially, this enables ``train once, use $N$ models": a single network yields useful predictions at every depth, supporting efficient shallow inference without retraining. 
Our work unifies approximation theory with practical deep learning, providing a new lens on representation learning and a flexible framework for multi-depth deployment.
The source code will be released unpon acceptance at \url{https://(open\_upon\_acceptance)}.
\end{abstract}

\begin{figure}[ht]
    \centering
    \includegraphics[width=0.45\textwidth]{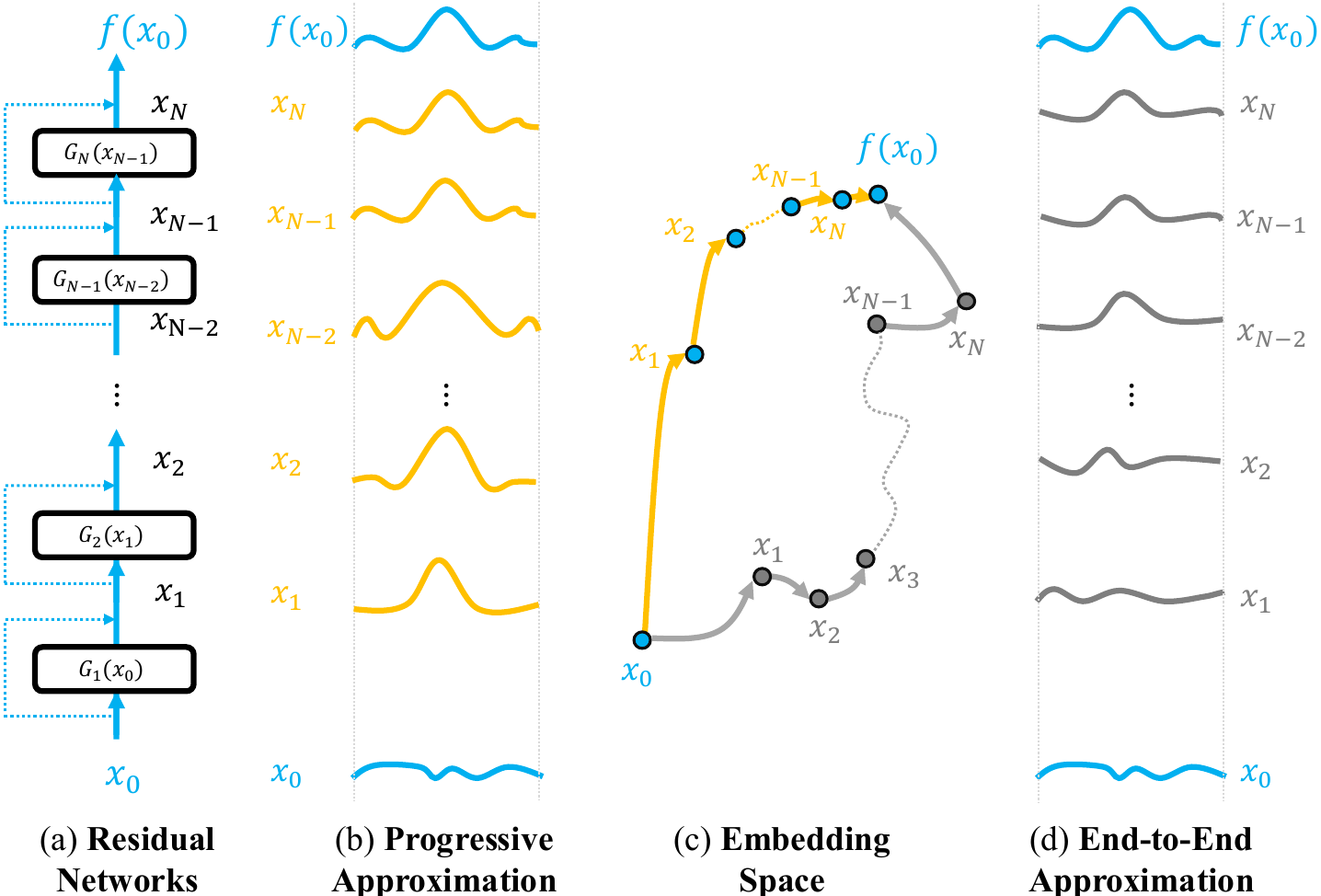}
    \caption{Progressive vs. E2E Approximation Trajectories: progressive approximation moves toward the target more effectively and yields useful intermediate layers, whereas intermediates in E2E training are less aligned with the target.}
    \label{fig:app_traj} 
\end{figure}

\section{Introduction}

The UAT~\cite{Cybenko1989ApproximationBS,Hornik1989MultilayerFN,Hornik1991ApproximationCO} guarantees that sufficiently large neural networks can approximate any continuous function. 
However, UAT is neither directly applicable to residual networks nor informative about how approximation develops during learning, especially in modern deep architectures where the input–output mapping is realized through many intermediate transformations.
In this work, we address a fundamental question left open by classical approximation theory: \textit{can the global approximation in deep residual networks be decomposed into a sequence of well-defined layer-wise approximation steps?}

We answer affirmatively. We show that training a deep network is equivalent to constructing an approximation trajectory $\{\mathbf{x}_0, \mathbf{x}_1, \dots, \mathbf{x}_N\}$ (see Figure~\ref{fig:app_traj}) that progressively moves from the input $\mathbf{x}_0$ toward the target $f(\mathbf{x}_0)$. 

We prove that there exists a class of progressive trajectories along which the approximation error $\|f(\mathbf{x}_0) - \mathbf{x}_i\|$ decreases monotonically with depth $i$. This reveals that deep residual networks do not merely act as black-box universal approximators; rather, they implement a structured, layer-by-layer refinement process grounded in approximation theory.

Building on this insight, we derive a principled training paradigm: Layer-wise Progressive Approximation (LPA), which explicitly encourages alignment with such trajectories. Unlike heuristic deep supervision, LPA is not an regularizer but a direct consequence of our theoretical characterization: each layer is guided to approximate the residual between the current state and the final target.

Our framework is architecture-agnostic. 
We validate progressive trajectories across full connected neural networks (FNNs), ResNets~\citep{He2015DeepRL}, and Transformers~\citep{Vaswani2017AttentionIA} on tasks spanning Simulation (function regression), Computer Vision (image classification), and NLP (language generation and classification with LLMs), indicating that this phenomenon is intrinsic to residual learning rather than specific to any single model or task.

In summary, our contributions are:
\begin{enumerate}
    \item A layer-wise decomposition of the global objective in residual networks, establishing a formal link between UAT and intermediate representations;
    
    \item A proof of the existence of monotonic progressive trajectories, providing theoretical grounding for intermediate supervision;
    
    \item A theoretical treatment of the LPA training principle with validation across architectures and tasks;
    
    \item A reframing from E2E training to progressive approximation, offering theoretical clarity and practical flexibility (e.g., ``train once, use $N$ models," demonstrated experimentally).
\end{enumerate}

\section{Related Work}

\textbf{Theoretical Analysis of Neural Approximation.}
The theoretical foundation of neural networks largely rests on the UAT~\citep{Cybenko1989ApproximationBS}, which establishes that sufficiently wide single-hidden-layer fully connected networks can approximate any continuous function on compact domains. Subsequent work extended UAT to multilayer architectures~\citep{Hornik1989MultilayerFN,Hornik1991ApproximationCO}.

With the rise of deep learning, a series of increasingly complex architectures have been developed including FNNs~\citep{Rumelhart1986LearningRB}, plain convolutional neural networks (CNNs)~\citep{LeCun1998GradientbasedLA,Krizhevsky2012ImageNetCW}, ResNet~\citep{He2015DeepRL}, Transformers~\citep{Vaswani2017AttentionIA}, BERT~\citep{devlin2019bert}, and Vision Transformers (ViTs)~\citep{dosovitskiy2020image}. While these models achieve remarkable performance, their growing architectural complexity has made rigorous analysis of their approximation behavior increasingly challenging.

Significant efforts have been devoted to establishing universal approximation guarantees for such modern architectures. For instance, \citet{Hardt2016IdentityMI,Lin2018ResNetWO} showed that FNNs retain universal approximation capability when equipped with residual connections; similarly, \citet{pmlr-v97-oono19a,He2021ApproximationPO} proved analogous results for residual CNNs. Transformers are also shown to be universal approximators~\citep{Yun2019AreTU,Kratsios2021UniversalAU}.

Fundamentally, prior work views the network as a monolithic universal approximator, concerned only with matching the final output to the target, and offers little insight into the layer-by-layer evolution of approximation. 
To our knowledge, this is the first study to leverage UAT to analyze approximation progressively at each layer. 
By decomposing the global objective into layer-wise targets, we expose the internal dynamics of approximation in residual networks.

\textbf{Algorithmic Approaches with Intermediate Supervision.}
Algorithmically, prior work on supervising intermediate layers falls into two broad categories. The first adds auxiliary losses, often unsupervised or self-supervised pretext objectives, to improve intermediate representations~\citep{Vincent2008ExtractingAC,Oord2018RepresentationLW,Pathak2016ContextEF,Hjelm2018LearningDR}. These objectives promote robust, discriminative features for downstream tasks but do not require intermediate layers to produce valid predictions.


The second category introduces additional modules (e.g., early-exit heads or auxiliary classifiers) to enable intermediate layers to output task-specific results~\citep{Bengio2006GreedyLT,Szegedy2015RethinkingTI,Bakhtiarnia2021MultiExitVT,Xu2023LGViTDE}. While effective for efficiency or adaptive inference, these approaches are largely heuristic: they lack a principled characterization of what each intermediate layer should learn, and offer no guarantee that the sequence of approximations improves monotonically toward the target.

We bridge this gap with the first formal theory of progressive approximation. Under a unified residual formulation, we show that the global objective decomposes into layerwise approximation problems and that a coherent trajectory with monotonically decreasing errors exists.

\section{Layer-Wise Progressive Approximation Trajectories in Residudal Networks}
\label{sec:LWPAT}
In this section, we give a justification for the existence of layer-wise progressive approximation trajectories.
\subsection{Residual Networks Unfolding and Error Trajectory}
\label{sec:rn_unfold}



%

Let $\{K_i\}_{i=1}^N \subseteq \mathbb{R}^d$ be compact subsets representing the input and intermediate representation spaces, and let $f: K_0 \to K_N$ be a continuous target function to be approximated. 
For an input $\mathbf{x}_0 \in K_0$, we denote by $\mathbf{x}_i \in K_i$ the output of the $i$-th layer of a deep residual network. The network follows the forward recurrence\footnote{For clarity of exposition, we present the analysis using FNNs. However, the framework is general and directly applies to ResNet and Transformer architectures, as both can be expressed within the similar residual formulation (see Appendix~\ref{apd:sec:unify_RNs} for details).}:
\begin{equation}
    \mathbf{x}_{i} = \mathbf{x}_{i-1} + \mathbf{W}'_{i} \sigma \big( \mathbf{W}_{i} \mathbf{x}_{i-1} + \mathbf{b}_{i} \big) + \mathbf{b}'_{i},
\end{equation}
where $\sigma$ denotes a pointwise nonlinearity (e.g., ReLU). Defining the residual mapping $G_i: K_{i-1} \to \mathbb{R}^d$ is a learnable residual block:
\begin{equation}
    G_i(\mathbf{x}_{i-1}) := \mathbf{W}'_{i} \sigma \big( \mathbf{W}_{i} \mathbf{x}_{i-1} + \mathbf{b}_{i} \big) + \mathbf{b}'_{i},
\end{equation}
which constitutes a FNN and shares the same mathematical form with UAT~\citep{Cybenko1989ApproximationBS}, the block simplifies to
\begin{equation}
    \mathbf{x}_{i} = \mathbf{x}_{i-1} + G_i(\mathbf{x}_{i-1}).
\end{equation}
Unfolding the recurrence over $N$ layers yields the cumulative representation (see Sections~\ref{sec:UnfoldtheRN} in the Appendix)
\begin{equation}
    g(\mathbf{x}_0) = \mathbf{x}_N = \mathbf{x}_0 + \sum_{i=1}^{N} G_i(\mathbf{x}_{i-1}).
    \label{eq:res_sum}
\end{equation}
For an E2E trained network, we denote the learned mappings by $G_i^*$. The behavior of intermediate representations can be analyzed via the following formal notion:

\begin{definition}[Error Trajectory]
Consider an $N$-layer residual network with forward pass $\mathbf{x}_i = \mathbf{x}_{i-1} + G_i^*(\mathbf{x}_{i-1})$ for $i = 1, \dots, N$. The approximation error at layer $i$ is defined as
\begin{equation}
    \varepsilon_i := \big\| f(\mathbf{x}_0) - \mathbf{x}_i \big\|, \quad i = 0, 1, \dots, N,
\end{equation}
and the sequence $\mathcal{T} = (\varepsilon_0, \varepsilon_1, \dots, \varepsilon_N)$ is called the error trajectory of the trained network on input $\mathbf{x}_0$.
\end{definition}

The objective of a residual $N$-layer network is to learn \(\{G_i^*\}_{1}^N\) approximates $f$:
\begin{equation}
    \{G_i^*\}_{1}^N=\argmin_{g}\|g(x_0)-f(x_0)\|.
\end{equation}
In the E2E paradigm, this global objective is applied only at the final layer, with earlier layers updated via the chain rule.

Since standard training minimizes only the final loss $\varepsilon_N$, the intermediate errors $\varepsilon_i$ ($i < N$) are not directly supervised. Consequently, the error trajectory $\mathcal{T}$ is generally non-monotonic and emerges as a byproduct of global optimization. For a more detailed derivation and discussion, see Sections~\ref{sec:Approximation_Trajectories} in the Appendix.

\subsection{Layer-Wise Approximation Perspective}
\label{subsec:layer_wise_approx}

Although residual networks are trained end-to-end (E2E) to minimize only the global error $\|f(\mathbf{x}_0) - \mathbf{x}_N\|$, their additive structure admits a natural layer-wise decomposition of the target function. This decomposition provides a useful analytical lens—even though E2E training does not enforce it explicitly.

\begin{proposition}[Implicit Layer-Wise Residual Targets]
    Let $\{G_i^*\}_{i=1}^N$ denote the residual mappings of an $N$-layer network after E2E training. Define the layer-$i$ residual target as
    \begin{equation}
        f_i(\mathbf{x}_0) := f(\mathbf{x}_0) - \mathbf{x}_0 - \sum_{j=1}^{i-1} G_j^*(\mathbf{x}_{j-1}),
        \label{eq:target_layer_i}
    \end{equation}
    where $\mathbf{x}_j = \mathbf{x}_{j-1} + G_j^*(\mathbf{x}_{j-1})$ for $j < i$. 
    Then, by construction, the ideal update at layer $i$ would be $G_i^{\text{ideal}}(\mathbf{x}_{i-1}) = f_i(\mathbf{x}_0)$.
    \label{prop:layer_wise_targets}
\end{proposition}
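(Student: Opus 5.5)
The plan is to make precise what ``ideal update'' means and then check it directly against the unfolded recurrence \eqref{eq:res_sum}. I would call the update $G_i$ at layer $i$ \emph{ideal}, given the frozen earlier layers $G_1^*,\dots,G_{i-1}^*$, if after applying it the representation matches the target exactly: $\mathbf{x}_i = f(\mathbf{x}_0)$, i.e.\ $\varepsilon_i = 0$. The proposition then follows from a single algebraic identity.

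First I would use the partial unfolding of the recurrence. By induction on $j$, exactly as in the derivation of \eqref{eq:res_sum},
\begin{equation*}
\mathbf{x}_{i-1} = \mathbf{x}_0 + \sum_{j=1}^{i-1} G_j^*(\mathbf{x}_{j-1}).
\end{equation*}
The base case $i-1=0$ is the empty sum. The inductive step is the recurrence $\mathbf{x}_j=\mathbf{x}_{j-1}+G_j^*(\mathbf{x}_{j-1})$.

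Substituting this into \eqref{eq:target_layer_i} gives the compact form $f_i(\mathbf{x}_0) = f(\mathbf{x}_0) - \mathbf{x}_{i-1}$. So the layer-$i$ residual target is the remaining residual between the current state and the global target. Then, for any candidate update,
\begin{equation*}
f(\mathbf{x}_0) - \mathbf{x}_i = f(\mathbf{x}_0) - \mathbf{x}_{i-1} - G_i(\mathbf{x}_{i-1}) = f_i(\mathbf{x}_0) - G_i(\mathbf{x}_{i-1}).
\end{equation*}
Hence $\varepsilon_i = \|f_i(\mathbf{x}_0) - G_i(\mathbf{x}_{i-1})\|$. This norm vanishes if and only if $G_i(\mathbf{x}_{i-1}) = f_i(\mathbf{x}_0)$, which is the claimed ideal update.

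This identity is also the layer-wise decomposition the section is after: the error at layer $i$ is exactly the error of $G_i$ in approximating its own residual target. In particular, with $i=N$, the global E2E objective $\|f(\mathbf{x}_0)-\mathbf{x}_N\|$ equals $\|f_N(\mathbf{x}_0) - G_N(\mathbf{x}_{N-1})\|$.

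The only delicate point is well-definedness. The target $f_i$ is written as a function of $\mathbf{x}_0$, but $G_i$ acts on $\mathbf{x}_{i-1}$. I would handle this by noting that $\mathbf{x}_{i-1}$ is a deterministic function of $\mathbf{x}_0$ through the fixed $G_j^*$, so the requirement is pointwise along the trajectory. If one wants $G_i^{\text{ideal}}$ as a genuine map on $K_{i-1}$, this needs the map $\mathbf{x}_0 \mapsto \mathbf{x}_{i-1}$ to be injective on $K_0$. Otherwise two inputs sharing the same $\mathbf{x}_{i-1}$ could demand different updates. Since the statement says ``by construction,'' I would state the result pointwise in $\mathbf{x}_0$ and note the injectivity caveat rather than prove it.
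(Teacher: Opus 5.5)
Your argument is correct and is essentially the paper's own. The paper asserts the proposition ``by construction'' from the unfolding \eqref{eq:res_sum}, which gives $f_i(\mathbf{x}_0) = f(\mathbf{x}_0) - \mathbf{x}_{i-1}$ (the form it uses at the start of Section~\ref{subsec:prog_traj}), so $G_i(\mathbf{x}_{i-1}) = f_i(\mathbf{x}_0)$ is exactly the update giving $\mathbf{x}_i = f(\mathbf{x}_0)$; your identity $\varepsilon_i = \|f_i(\mathbf{x}_0) - G_i(\mathbf{x}_{i-1})\|$ just makes this explicit.

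One small sharpening of your caveat: injectivity of $\mathbf{x}_0 \mapsto \mathbf{x}_{i-1}$ is sufficient but not necessary. Since $f_i = f - \mathbf{x}_{i-1}$, it is well defined as a function of $\mathbf{x}_{i-1}$ exactly when equal representations $\mathbf{x}_{i-1}$ force equal values of $f(\mathbf{x}_0)$. That is the consistency condition the paper handles through Theorem~\ref{th:FCT}, which also gives continuity of the induced map on the compact set $K_{i-1}$, as needed for the subsequent UAT step.
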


Proposition~\ref{prop:layer_wise_targets} does not claim that E2E training actively steers $G_i^*$ toward $f_i(\mathbf{x}_0)$. Rather, it highlights that, given any fixed set of preceding layers, we could define the residual target $f_i$, and the role of $G_i$ can be interpreted as approximating this target.

To assess whether such an approximation is theoretically feasible, consider the effective input domain to layer $i$:
\[
    K_{i-1} := \big\{ \mathbf{x}_{i-1}(\mathbf{x}_0) \mid \mathbf{x}_0 \in K_0 \big\}.
\]
Assume $K_0 \subset \mathbb{R}^d$ is compact and all $G_j^*$ are continuous; then $K_{i-1}$ is also compact. Since $f_i(\mathbf{x}_0)$ is a continuous function of $\mathbf{x}_0$, and $\mathbf{x}_{i-1}$ is a continuous function of $\mathbf{x}_0$, the mapping from $\mathbf{x}_{i-1}$ to $f_i(\mathbf{x}_0)$ is continuous on $K_{i-1}$. By the UAT~\citep{Cybenko1989ApproximationBS}, for any $\varepsilon_i > 0$, there exists a FNN $G_i$ satisfying
\begin{equation}
    \sup_{\mathbf{x}_{i-1} \in K_{i-1}} \big\| G_i(\mathbf{x}_{i-1}) - f_i(\mathbf{x}_0) \big\| < \varepsilon_i.
    \label{eq:uat_layer_i}
\end{equation}
Thus, from an approximation-theoretic standpoint, each residual block is capable of realizing its implicit target, provided sufficient capacity.

However, because standard E2E training optimizes only the final output, intermediate layers receive no explicit signal to match $f_i$. Consequently, the actual error trajectory $\{\varepsilon_i\}_{i=1}^N$ (defined in Section~\ref{sec:rn_unfold}) may be non-monotonic in practice.

\subsection{Progressive Approximation Trajectories}
\label{subsec:prog_traj}

Proposition~\ref{prop:layer_wise_targets} shows that, in a globally optimal residual network, each layer $i$ implicitly targets the residual function
\[
    f_i(\mathbf{x}_0) = f(\mathbf{x}_0) - \mathbf{x}_{i-1},
\]
which represents the part of the target not yet captured by the first $i-1$ layers. This raises a natural question: does there exist a choice of residual mappings $\{G_i\}_{i=1}^N$ such that the approximation error never increases with depth? We now show that such a trajectory indeed exists.

\begin{theorem}[Existence of Progressive Trajectory]
    There exists a sequence of residual mappings $\{G_i\}_{i=1}^N$ such that the corresponding uniform errors
    \[
        \varepsilon_i := \sup_{\mathbf{x}_0 \in K_0} \big\| f(\mathbf{x}_0) - \mathbf{x}_i \big\|
    \]
    satisfy
    \begin{equation}
        \varepsilon_1 \geq \varepsilon_2 \geq \cdots \geq \varepsilon_N \geq 0.
    \end{equation}
    \label{th:prog_traj}
\end{theorem}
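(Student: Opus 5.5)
The plan is a layer-by-layer induction in which each residual block is fitted to its implicit residual target from Proposition~\ref{prop:layer_wise_targets}, as in the argument leading to \eqref{eq:uat_layer_i}. The UAT approximation tolerance at each layer is chosen so that the uniform error cannot increase.

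Suppose layers $1,\dots,i-1$ have already been fixed as continuous maps. Then $\mathbf{x}_{i-1}$ is a continuous function of $\mathbf{x}_0$, and $K_{i-1}$ is compact. The previous error $\varepsilon_{i-1}$ is finite, since it is the supremum of a continuous function over the compact set $K_0$.

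\begin{enumerate}
\item \textbf{Choose layer $i$ to reduce the error.} We want $G_i$ with $\varepsilon_i \le \varepsilon_{i-1}$. One subtlety is that $f_i(\mathbf{x}_0)=f(\mathbf{x}_0)-\mathbf{x}_{i-1}$ is a function of $\mathbf{x}_0$, not of $\mathbf{x}_{i-1}$. If $\mathbf{x}_0\mapsto\mathbf{x}_{i-1}$ is not injective, $f_i$ need not factor through $\mathbf{x}_{i-1}$, so I would not rely on exact factorization.
\item \textbf{Interpolate toward the target.} Take $G_i=\alpha_i H_i$ with $\alpha_i\in[0,1]$, where $H_i$ is a network approximating a continuous map $h_i$ on $K_{i-1}$. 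Then
\[
f(\mathbf{x}_0)-\mathbf{x}_i=(1-\alpha_i)\bigl(f(\mathbf{x}_0)-\mathbf{x}_{i-1}\bigr)+\alpha_i\bigl(f(\mathbf{x}_0)-\mathbf{x}_{i-1}-H_i(\mathbf{x}_{i-1})\bigr).
\]
By the triangle inequality,
\[
\varepsilon_i\le(1-\alpha_i)\varepsilon_{i-1}+\alpha_i\sup_{\mathbf{x}_0\in K_0}\bigl\|f_i(\mathbf{x}_0)-H_i(\mathbf{x}_{i-1})\bigr\|.
\]
So it suffices to find $H_i$ whose uniform residual error is at most $\varepsilon_{i-1}$.
\item \textbf{The trivial choice always works.} Taking $H_i\equiv 0$ gives error exactly $\varepsilon_{i-1}$. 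This requires $\mathbf{W}'_i=0$ and $\mathbf{b}'_i=0$, which is an admissible FNN block. Hence $G_i\equiv 0$ already yields $\varepsilon_i=\varepsilon_{i-1}$.
\end{enumerate}

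Together these steps prove the theorem as stated, including the case $N=1$. Non-strict monotonicity and $\varepsilon_N\ge 0$ (a supremum of norms) are immediate.

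For a meaningful rather than trivial trajectory, I would add a second step. Assume the first layer (or any layer whose input map $\mathbf{x}_0\mapsto\mathbf{x}_{i-1}$ is injective, e.g.\ layer $1$ where $\mathbf{x}_0$ is the input itself) makes $f_i$ a continuous function of $\mathbf{x}_{i-1}$ on $K_{i-1}$. The continuity holds because a continuous bijection from a compact space to a Hausdorff space has a continuous inverse. Then UAT via \eqref{eq:uat_layer_i} gives $H_i$ with residual error $<\delta_i$ for any prescribed $\delta_i\le\varepsilon_{i-1}$. Choosing $\alpha_i=1$ yields $\varepsilon_i<\delta_i$, a strict decrease whenever $\varepsilon_{i-1}>0$.

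The main obstacle is exactly this well-definedness of the residual target as a function on $K_{i-1}$. The paper's preceding discussion glosses over it. I would handle it in one of two ways. One option is to note that $\mathbf{x}_i=\mathbf{x}_{i-1}+G_i(\mathbf{x}_{i-1})$ stays injective if each $G_i$ is Lipschitz with constant $<1$, and UAT approximants can be smoothed to satisfy this, although that constrains the strict-decrease claim. The other is to fall back on the zero-block argument of the third step above, which needs no such condition and suffices for the theorem.
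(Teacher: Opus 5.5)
Your proof is correct and follows the paper's own route. The decisive step in both is an induction in which the zero block ($\mathbf{W}'_i=0$, $\mathbf{b}'_i=0$) certifies $\varepsilon_i\le\varepsilon_{i-1}$, and you are right that the paper's accompanying UAT step tacitly assumes $f_i$ factors through $\mathbf{x}_{i-1}$.

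For your optional strict-decrease version, you need no smoothing. Take $G_i=\alpha_i H_i$ with $\alpha_i>0$ small enough that $\alpha_i\,\mathrm{Lip}(H_i)<1$: this keeps $\mathbf{x}_0\mapsto\mathbf{x}_i$ injective, and your interpolation bound still gives $\varepsilon_i\le(1-\alpha_i)\varepsilon_{i-1}+\alpha_i\delta_i<\varepsilon_{i-1}$ whenever $\delta_i<\varepsilon_{i-1}$.
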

\begin{proof}
At layer 1, the residual to approximate is $f_1(\mathbf{x}_0) = f(\mathbf{x}_0) - \mathbf{x}_0$. By the UAT, there exists a FFN $G_1^*$ such that the output $\mathbf{x}_1 = \mathbf{x}_0 + G_1^*(\mathbf{x}_0)$ satisfies
\[
    \varepsilon_1 = \sup_{\mathbf{x}_0 \in K_0} \|f(\mathbf{x}_0) - \mathbf{x}_1\| < \infty.
\]
Assume that after $i-1$ layers, we have obtained $\mathbf{x}_{i-1}$ with error $\varepsilon_{i-1}$. The remaining residual is $f_i(\mathbf{x}_0) = f(\mathbf{x}_0) - \mathbf{x}_{i-1}$. Since $f_i$ is a continuous function of $\mathbf{x}_0$, and the input to layer $i$ is $\mathbf{x}_{i-1}$ (which is determined by $\mathbf{x}_0$), the universal approximation capability implies that there exists a network $G_i$ capable of approximating this residual from $\mathbf{x}_{i-1}$. In particular, we can choose $G_i^*$ such that the new error satisfies
\[
    \varepsilon_i = \sup_{\mathbf{x}_0 \in K_0} \|f_i(\mathbf{x}_0) - G_i^*(\mathbf{x}_{i-1})\| \leq \varepsilon_{i-1}.
\]

This is always possible because the best achievable error at layer $i$ cannot exceed the current residual magnitude $\varepsilon_{i-1}$; otherwise, one could simply set $G_i \equiv 0$ and retain error $\varepsilon_{i-1}$. Hence, by selecting $G_i$ appropriately (e.g., better than zero), we ensure $\varepsilon_i \leq \varepsilon_{i-1}$. By induction, there exists a sequence $\{G_i\}_{i=1}^N$ yielding a non-increasing error trajectory:
\[
    \varepsilon_1 \geq \varepsilon_2 \geq \cdots \geq \varepsilon_N \geq 0.
\]
This completes the proof.
\end{proof}

Thus, progressive approximation trajectories are theoretically realizable: the compositional structure of residual networks, combined with the universal approximation property, guarantees the existence of a depth-wise refinement path along which the worst-case error does not increase.

\section{Learning for Progressive Approximation}

To realize the progressive approximation trajectory established in theory, we design a practical training paradigm that aligns intermediate layers with the global objective while respecting architectural constraints. The key challenge is that intermediate representations $\mathbf{x}_i$ typically live in a hidden space of dimension $d$, which may differ from the output dimension $d_y$ of the target function $f: \mathbb{R}^d \to \mathbb{R}^{d_y}$. This dimensional mismatch prevents direct application of the loss $\mathcal{L}(\mathbf{x}_i, f(\mathbf{x}_0))$ at intermediate layers. 

Our solution is to introduce a shared, learnable linear readout that projects all layer outputs into the target space. This leads to the following design principles:

\begin{itemize}
    \item \textbf{Global supervision at all depths:} Every layer receives a learning signal derived from the global target $f(\mathbf{x}_0)$, enabling error feedback at all depths.
    
    \item \textbf{Consistent output-space alignment:} All layers use the same readout mapping for predictions, ensuring their objectives are defined in a common space and remain coupled through the shared parameters.
    
\end{itemize}
We now describe how these principles are instantiated in the LPA framework.

\begin{algorithm}[htp]
\caption{LPA Training}
\label{alg:LPA}
\begin{algorithmic}[1]
\STATE Preprocessing layer: $G_0()$; Residual network of $N$ layers $\{G_i()\}_1^N$ and weights $\{\theta_i\}_1^N$; Linear transformation for prediction: $\mathbf{W}_{N+1}$; Training data: $\{(\mathbf{x}, \mathbf{y})\}$; Loss function: $\ell$.

\FOR{epoch $= 1$ \textbf{to} $N_{\text{epochs}}$}
        \FOR{each batch $(\mathbf{x}_b, \mathbf{y}_b)$}
            \STATE $\mathbf{x}_0 \leftarrow G_0(\mathbf{x}_b)$
            \FOR{$i = 1$ \textbf{to} $L$} 
                \STATE $\mathbf{x}_i \leftarrow \mathbf{x}_{i-1} + G_i(\mathbf{x}_{i-1})$ 
                \STATE $\hat{\mathbf{y}}_i \leftarrow \mathbf{W}_{N+1}\mathbf{x}_i$ 
                \STATE $\mathcal{L}_i \leftarrow \ell(\hat{\mathbf{y}}_i, \mathbf{y}_b)$
            \ENDFOR
        \STATE Backpropagate $\mathcal{L}=\Sigma_{i=1}^N\lambda_i\mathcal{L}_i$ and update $\{\theta_1, ..., \theta_N, \mathbf{W}_{N+1}.\}$
        \ENDFOR
\ENDFOR
\end{algorithmic}
\end{algorithm}

\subsection{Synchronized Layer Objectives via Shared Readout}

Let $\mathbf{x}_i \in \mathbb{R}^{d}$ denote the output of layer $i$, and let $f(\mathbf{x}_0) \in \mathbb{R}^{d_y}$ be the target. Since $d \neq d_y$ in general, we introduce a learnable linear readout matrix $\mathbf{W}_{N+1} \in \mathbb{R}^{d_y \times d}$ and share it across all layers. The prediction at depth $i$ is then:
\[
    \hat{f}_i(\mathbf{x}_0) = \mathbf{W}_{N+1} \mathbf{x}_i.
\]
Note that $\mathbf{W}_{N+1}$ is jointly optimized with the residual blocks $\{G_i\}_{i=1}^N$ during training. According to Eq.\ref{eq:res_sum}, the  output-space of a residual network prediction becomes a cumulative sum:
\begin{equation}
\begin{aligned}
    \hat{f}_i(\mathbf{x}_0) &= \mathbf{W}_{N+1}\big(\mathbf{x}_0 + \sum_{j=1}^i  G_j(\mathbf{x}_{j-1})\big)\\
    &= \mathbf{W}_{N+1} \mathbf{x}_0 + \sum_{j=1}^i \underbrace{\mathbf{W}_{N+1} G_j(\mathbf{x}_{j-1})}_{=: H_i(\mathbf{x}_{j-1})}.
    \label{eq:layer_rep_shh}    
\end{aligned}
\end{equation}
Each term $H_j: \mathbb{R}^{d} \to \mathbb{R}^{d_y}$ can be viewed as an output-space correction contributed by block $j$.

\begin{table*}[h]
\centering
\caption{Performance comparison of the proposed LPA and E2E optimization on complex surface fitting tasks. The best results are in bold.}
\label{tab:LPAT-ori}
\resizebox{\textwidth}{!}{
\begin{tabular}{l|c|c|c|c|c|c|c|c|c|c|c|c|c|c|c|c} 
\toprule
\multicolumn{1}{l}{\makecell[l]{\textbf{Datasets}}} 
& \multicolumn{2}{c}{\makecell[c]{\textbf{Anisotropic}\\\textbf{Radial Oscillations}}}
& \multicolumn{2}{c}{\makecell[c]{\textbf{Damped}\\\textbf{Concentric Ripples}}}
& \multicolumn{2}{c}{\makecell[c]{\textbf{Exponential Saddle}}}
& \multicolumn{2}{c}{\makecell[c]{\textbf{Quadratically}\\\textbf{Warped Oscillations}}}
& \multicolumn{2}{c}{\makecell[c]{\textbf{Standing}\\\textbf{Wave Grid}}}
& \multicolumn{2}{c}{\makecell[c]{\textbf{Asymmetric Power-}\\\textbf{Modulated Wave}}}
& \multicolumn{2}{c}{\makecell[c]{\textbf{Localized High-}\\\textbf{Frequency Stripes}}}
& \multicolumn{2}{c}{\makecell[c]{\textbf{Logarithmic}\\\textbf{Central Well}}} \\ 
\cmidrule(r){2-3}\cmidrule(rl){4-5}\cmidrule(rl){6-7}\cmidrule(rl){8-9}\cmidrule(rl){10-11}\cmidrule(rl){12-13}\cmidrule(rl){14-15}\cmidrule(l){16-17}
\multicolumn{1}{l}{\textbf{Methods}} 
& \textbf{MSE $\downarrow$} & \textbf{MAE $\downarrow$}
& \textbf{MSE $\downarrow$} & \textbf{MAE $\downarrow$}
& \textbf{MSE $\downarrow$} & \textbf{MAE $\downarrow$}
& \textbf{MSE $\downarrow$} & \textbf{MAE $\downarrow$}
& \textbf{MSE $\downarrow$} & \textbf{MAE $\downarrow$}
& \textbf{MSE $\downarrow$} & \textbf{MAE $\downarrow$}
& \textbf{MSE $\downarrow$} & \textbf{MAE $\downarrow$}
& \textbf{MSE $\downarrow$} & \textbf{MAE $\downarrow$} \\
\midrule

\textbf{E2E}      
& 0.0081 & 0.0458
& 0.1431 & 0.2535
& 0.0048 & 0.0428
& 0.0159 & 0.0679
& 0.0062 & 0.0441
& 0.0071 & 0.0474
& 0.0010 & 0.0238
& 0.0109 & 0.0654 \\

\textbf{LPA}        
& \textbf{0.0016} & \textbf{0.0292}
& \textbf{0.0109} & \textbf{0.0752}
& \textbf{0.0005} & \textbf{0.0173}
& \textbf{0.0077} & \textbf{0.0599}
& \textbf{0.0011} & \textbf{0.0239}
& \textbf{0.0019} & \textbf{0.0322}
& \textbf{0.0009} & \textbf{0.0207}
& \textbf{0.0087} & \textbf{0.0649} \\
\bottomrule
\end{tabular}}
\end{table*}

\begin{figure*}[ht]
    \centering
    \includegraphics[width=0.98\textwidth]{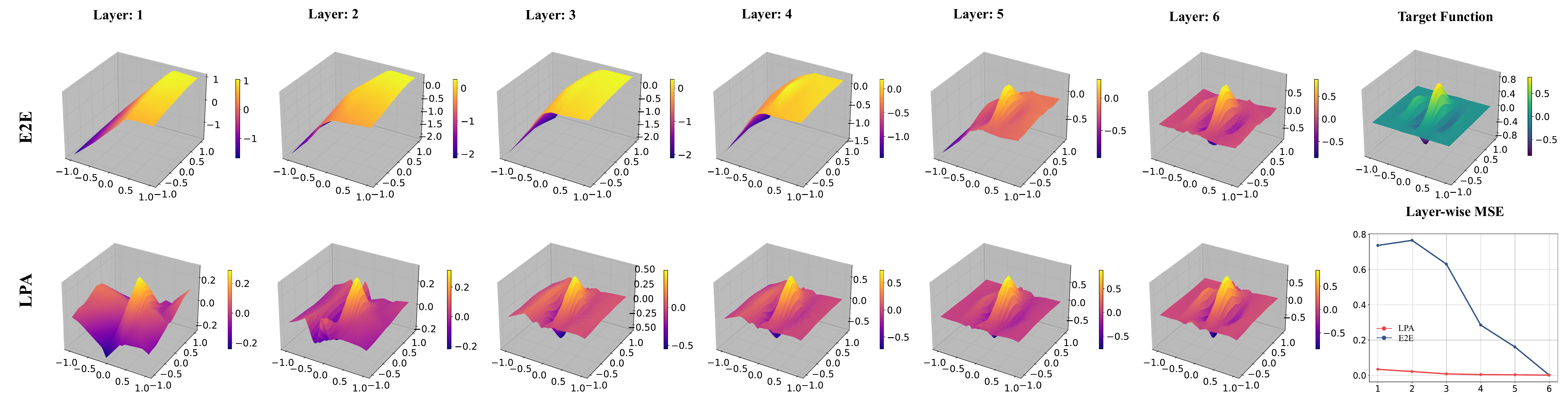}
    \caption{Layer-wise approximation results of LPA and E2E optimizations. The target function is $f(x, y) = e^{-5(x^2 + y^2)} \sin(10x)$ with Localized high-frequency stripe. LPA exhibits clear progression, reaching near-perfect approximation by layer 3.}
    \label{fig:LHFS} 
\end{figure*}

The LPA loss aggregates supervision signals from all layers:
\begin{equation}
    \mathcal{L}_{\text{LPA}} = \sum_{i=1}^N \lambda_i \, \ell\big( \hat{f}_i(\mathbf{x}_0),\, f(\mathbf{x}_0) \big),
    \label{eq:lpa_loss}
\end{equation}
where $\lambda_i \geq 0$ are weighting coefficients. Because all $\hat{f}_i$ are computed using the same $\mathbf{W}_{N+1}$, the layer-wise losses are inherently coupled, so the updating of $\mathbf{W}_{N+1}$ affects all terms simultaneously, enforcing consistent output-space alignment. The implementation as shown in Algorithm \ref{alg:LPA}. By supervising earlier layers, we constrain the residual passed to deeper blocks, facilitating the non-increasing error behavior suggested by Theorem~\ref{th:prog_traj}.

\begin{figure*}[ht]
    \centering
    \includegraphics[width=0.95\textwidth]{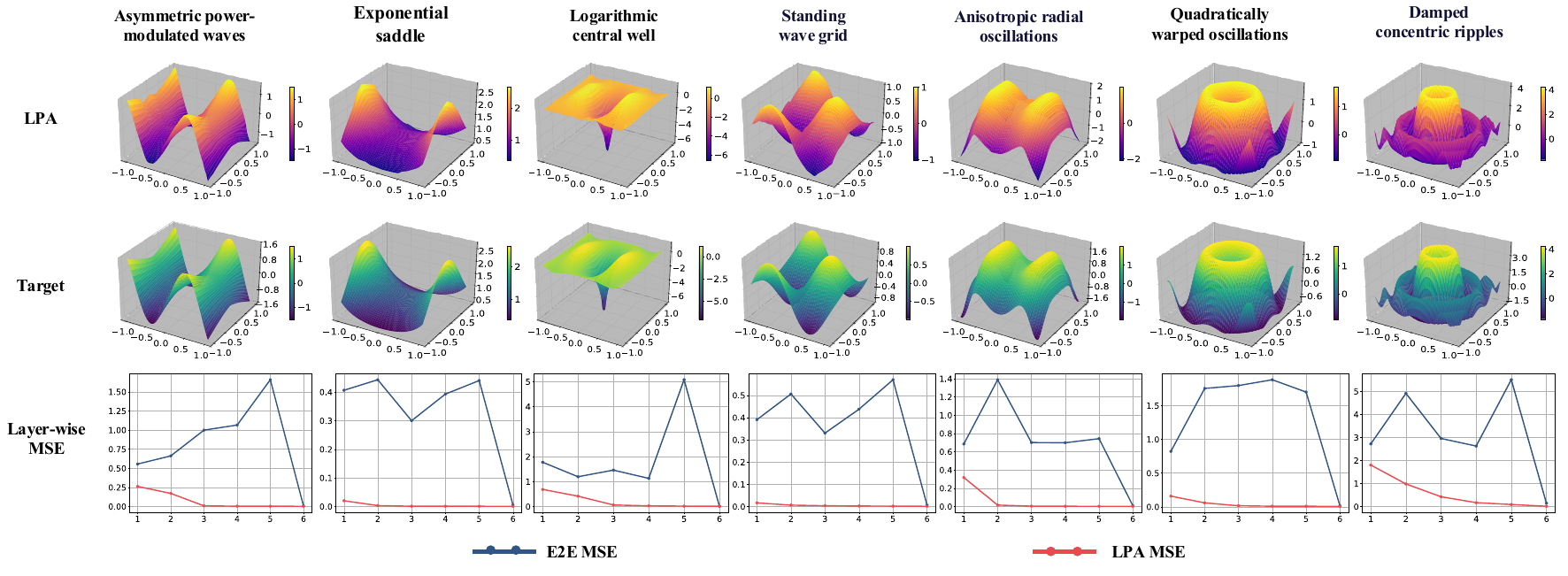}
    \caption{Additional complex surface fitting results with LPA, and a layerwise performance comparison between LPA and E2E.}
    \label{fig:csf} 
\end{figure*}

\subsection{Progression Preservation with Shared Readout}
\label{sec:shared_readout}

A natural question arises: does using a shared final linear layer $\mathbf{W}_{N+1}$, applied to all intermediate representations, preserve the layer-wise approximation and progressive refinement properties? This design is not a trick but aligns with the intrinsic structure of residual networks. Recall that the network output
\begin{equation}
    \hat{f}_{N+1}(\mathbf{x}_0) = \mathbf{W}_{N+1} \mathbf{x}_N,
\end{equation}
and by unrolling the residual recurrence $\mathbf{x}_i = \mathbf{x}_{i-1} + G_i(\mathbf{x}_{i-1})$, we obtain (as in Eq.~\ref{eq:res_sum})
\begin{equation}
    \hat{f}_{N+1}(\mathbf{x}_0) = \mathbf{W}_{N+1} \mathbf{x}_0 + \sum_{j=1}^N \underbrace{\mathbf{W}_{N+1} G_j(\mathbf{x}_{j-1})}_{=: H_j(\mathbf{x}_{j-1})}.
\end{equation}
This decomposition reveals that the contribution of each residual block $G_j$ to the final prediction is necessarily mediated through the same linear map $\mathbf{W}_{N+1}$. Therefore, evaluating intermediate layers via the shared readout
\begin{equation}
    \hat{f}_i(\mathbf{x}_0) := \mathbf{W}_{N+1} \mathbf{x}_i = \mathbf{W}_{N+1} \mathbf{x}_0 + \sum_{j=1}^i H_j(\mathbf{x}_{j-1})
    \label{eq:sro}
\end{equation}
is not only practical but also theoretically faithful to the network’s native computation. Under this formulation, each $\hat{f}_i$ remains a continuous function on the compact input domain $K_0$, and thus satisfies the conditions of the UAT. A layer-wise UAT argument, analogous to Proposition~\ref{prop:layer_wise_targets}, guarantees that for any target decomposition $\{f_i\}_{i=1}^N$, there exist residual blocks $\{G_j^*\}$ such that the induced functions $\{\hat{f}_i\}$ approximate the desired trajectory.

Moreover, the approximation error at layer $i$ can be bounded as
\begin{equation}
    \sup_{\mathbf{x}_0 \in K_0} \big\| f(\mathbf{x}_0) - \hat{f}_i(\mathbf{x}_0) \big\| < \varepsilon_i,
\end{equation}
with $\varepsilon_1 \geq \varepsilon_2 \geq \cdots \geq \varepsilon_N$, establishing a monotonic progressive trajectory as in Theorem~\ref{th:prog_traj}.

Hence, the shared readout preserves both the expressivity and the progressive refinement property. A full constructive proof with explicit error propagation bounds is provided in Appendix~\ref{sec:SharedReadout}. Besides, the layer-wise representation under intermediate dimension changes is discussed in Appendix~\ref{sec:dim_change_proof}.

\section{Experiments}
To assess progressive approximation, we test three settings:
\begin{itemize}
    \item Complex surface fitting: residual FNNs regressing a diverse set of 8 scientific functions to examine the performance and progressive behavior of intermediate layers.
    \item Image classification: ViTs and ResNets on 4 standard datasets to assess architectural compatibility, with large-scale ImageNet to test scalability.
    \item Large language models: QWen on text classification and generation benchmarks to evaluate scalability and task compatibility.
\end{itemize}
Across simulations and real-world vision and language tasks, we observe consistent progressive approximation across architectures and tasks. Full implementation details are provided in Appendix~\ref{sec:exp_details}. 

\subsection{Experiments for Complex Surface Fitting}
\label{sec:Curve_Fitting}
%
We generate 8 simulation datasets using functions as follows: Asymmetric power-modulated waves $f(x, y) = \big( |x|^{0.7} + |y|^{1.3} \big) \sin(4x)$, Localized high-frequency stripes $ f(x, y) = e^{-5(x^2 + y^2)} \sin(10x)$, Exponential saddle $ f(x, y) = e^{x^2 - y^2} $, Logarithmic central well $ f(x, y) = \log(x^2 + y^2 + 10^{-5}) \cos(5x) $, Standing wave grid $ f(x, y) = \sin(3x) \cos(3y) $, Anisotropic radial oscillations $ f(x, y) = \dfrac{\sin(4x^2 + y^2)}{\sqrt{x^2 + y^2 + 0.001}} $, Quadratically warped oscillations: $ f(x, y) = \sin(5x^2) + \cos(3y^2) $, and Damped concentric ripples: $ f(x, y) = \dfrac{\sin\!\big(10(x^2 + y^2)\big)}{x^2 + y^2 + 0.1} $. Full descriptions are provided in Appendix~\ref{sec:Curve_Fitting_data}.
There are 20,000 input-output pairs for each target function (10,000 pairs for training and 10,000 pairs for testing).

The target functions are complex, non-linear surfaces defined over $\mathbb{R}^2$. All experiments employ a standardized 6-layer residual FNNs, where each layer consists of a 30-width MLP with ReLU activation and residual connections. We compare two training paradigms: E2E optimization and our proposed LPA.

As shown in Table~\ref{tab:LPAT-ori}, LPA achieves lower final MSE and MAE across all benchmarks compared to E2E, with improvements ranging from $1.1\times$ to $13.1\times$ in MSE. More importantly, the aim here is not merely superior final accuracy but to confirm that the network follows the progressive approximation trajectory predicted by theory.

Figure~\ref{fig:LHFS} shows a layerwise comparison for $f(x, y) = e^{-5(x^2 + y^2)} \sin(10x)$. With E2E training, mid-layer flexibility lacks early guidance and often fails to refine the approximation. 
In contrast, LPA allows early layers to capture coarse structure and later layers to progressively refine it, yielding near perfect alignment at the final layer. 
This matches the progressive trajectory in Theorem~\ref{th:prog_traj}.


Figure~\ref{fig:csf} further illustrates the final approximated surfaces under LPA paradigms. LPA consistently produces smoother, faithful reconstructions of the target functions. Moreover, the evolution of layer-wise MSE reveals that LPA maintains a monotonically decreasing trend, whereas E2E exhibits fluctuating errors, sometimes increasing mid-training, indicating unstable learning dynamics.

These results confirm that LPA not only improves final performance in this controlled setting but also enforces a theoretically sound approximation process: each layer contributes and cumulatively toward the global objective.

\begin{table}[htbp]
\centering
\caption{Performance comparison of LPA and E2E with ViT on three benchmarks of CIFAR-10 (C10), CIFAR-100 (C100), and Fashion-MNIST (FM). Best scores are in bold.}
\label{tab:im_cls_Vit}
\resizebox{0.5\textwidth}{!}{
\begin{tabular}{ll|ccc|ccc}
\noalign{\hrule height 0.8pt}
\multicolumn{2}{l|}{\textbf{Models}} & \multicolumn{6}{c}{\textbf{ViT (Transformers)}} \\
\hline
\multicolumn{2}{l|}{\multirow{2}{*}{\textbf{Training Paradigms}}} 
& \multicolumn{3}{c|}{\textbf{\#Layers = 12}} & \multicolumn{3}{c}{\textbf{\#Layers = 24}} \\
\cline{3-8}
\multicolumn{2}{l|}{} 
& \textbf{C10} & \textbf{C100} & \textbf{FM} & \textbf{C10} & \textbf{C100} & \textbf{FM} \\
\hline
\multirow{2}{*}{\textbf{E2E}} 
& \textbf{Acc.} $\uparrow$   & 84.85 & \textbf{60.83} & 92.90 & 86.59 & 60.65 & \textbf{93.40} \\
& \textbf{Loss} $\downarrow$ & 0.47  & 1.53  & 0.19  & 0.46  & 1.72  & 0.19  \\
\hline
\multirow{2}{*}{\textbf{LPA}} 
& \textbf{Acc.} $\uparrow$   & \textbf{85.16} & 60.0 & \textbf{93.00} & \textbf{86.95} & \textbf{60.69} & 93.39 \\
& \textbf{Loss} $\downarrow$ & 0.47  & 1.62  & 0.19  & 0.47  & 1.79  & 0.20  \\
\noalign{\hrule height 0.8pt}
\end{tabular}}
\end{table}

\begin{table}[htp]
\centering
\caption{Performance comparison of LPA and E2E with ResNet on three benchmarks of CIFAR-10 (C10), CIFAR-100 (C100), and Fashion-MNIST (FM). Best scores are in bold.}
\label{tab:im_cls_CNN}
\resizebox{0.5\textwidth}{!}{
\begin{tabular}{ll|c|c|c|c|c|c} 
\noalign{\hrule height.8pt}
\multicolumn{2}{l|}{\textbf{Models}} & \multicolumn{6}{c}{\textbf{\textbf{ResNet (Residual CNN)}}} \\ 
\hline
\multicolumn{2}{l|}{\multirow{2}{*}{\textbf{Training Paradigms}}} & \multicolumn{3}{c|}{\textbf{\#Layers = 18}} & \multicolumn{3}{c}{\textbf{\#Layers = 34}} \\ 
\cline{3-8}
\multicolumn{2}{l|}{} & \textbf{C10} & \textbf{C100} & \textbf{FM} & \textbf{C10} & \textbf{C100} & \textbf{FM} \\ 
\hline
\multirow{2}{*}{\textbf{E2E}} & \textbf{Acc.} $\uparrow$ & \textbf{85.70} & 61.83 & 91.54 & 87.57 & \textbf{64.34} & \textbf{92.52} \\
& \textbf{Loss} $\downarrow$ & 0.42 & 1.35 & 0.23 & 0.47 & 1.27 & 0.20 \\\hline
\multirow{2}{*}{\textbf{LPA}} & \textbf{Acc.} $\uparrow$ & 85.44 & \textbf{62.34} & \textbf{92.54} & \textbf{87.81} & 63.95 & 92.03 \\
& \textbf{Loss} $\downarrow$ & 0.30 & 1.22 & 0.20 & 0.24 & 1.08 & 0.17 \\
\noalign{\hrule height.8pt}
\end{tabular}}
\end{table}

\begin{figure}[ht]
   \centering
    \includegraphics[width=0.48\textwidth]{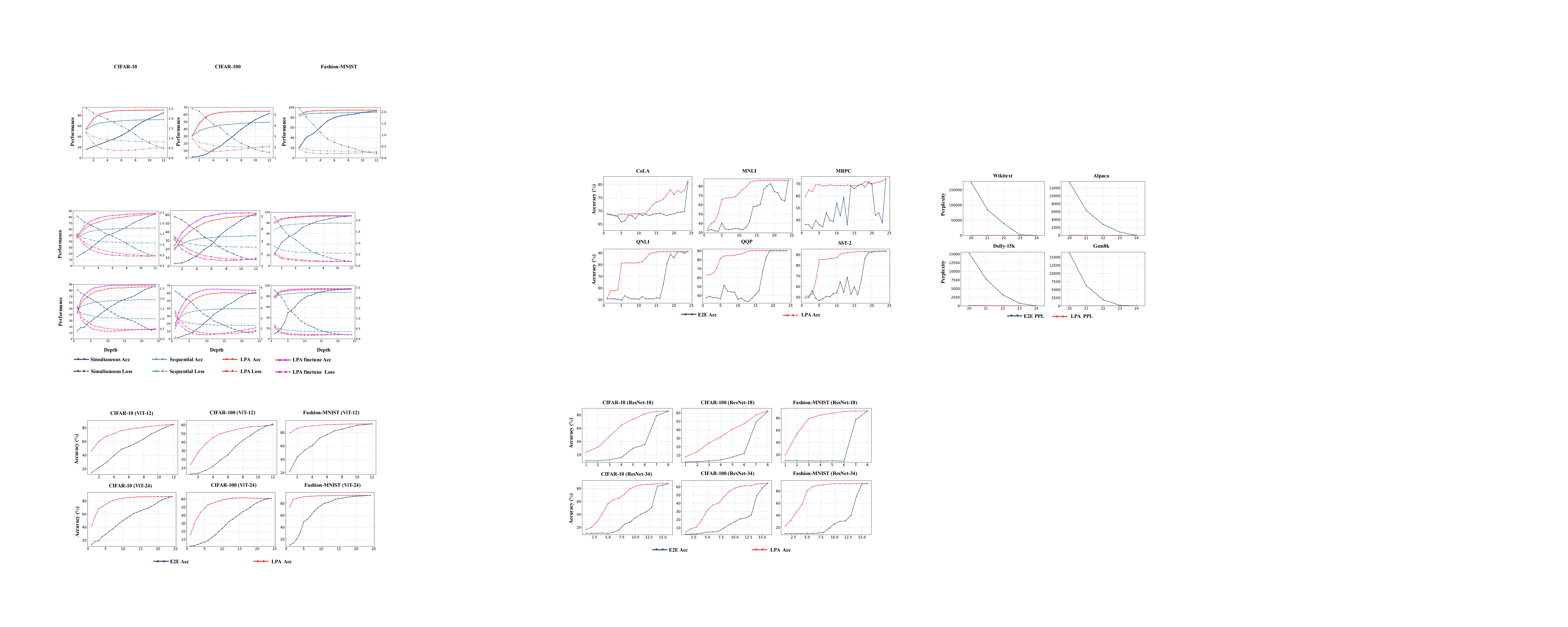} 
   \caption{Layer-wise performance comparison with ViT. Convergence at early layers appears across all three datasets, with strong approximation by layer 5 of 24 on Fashion-MNIST.} 
   \label{fig:vit_cur} 
\end{figure}

\begin{figure}[ht]
   \centering
    \includegraphics[width=0.48\textwidth]{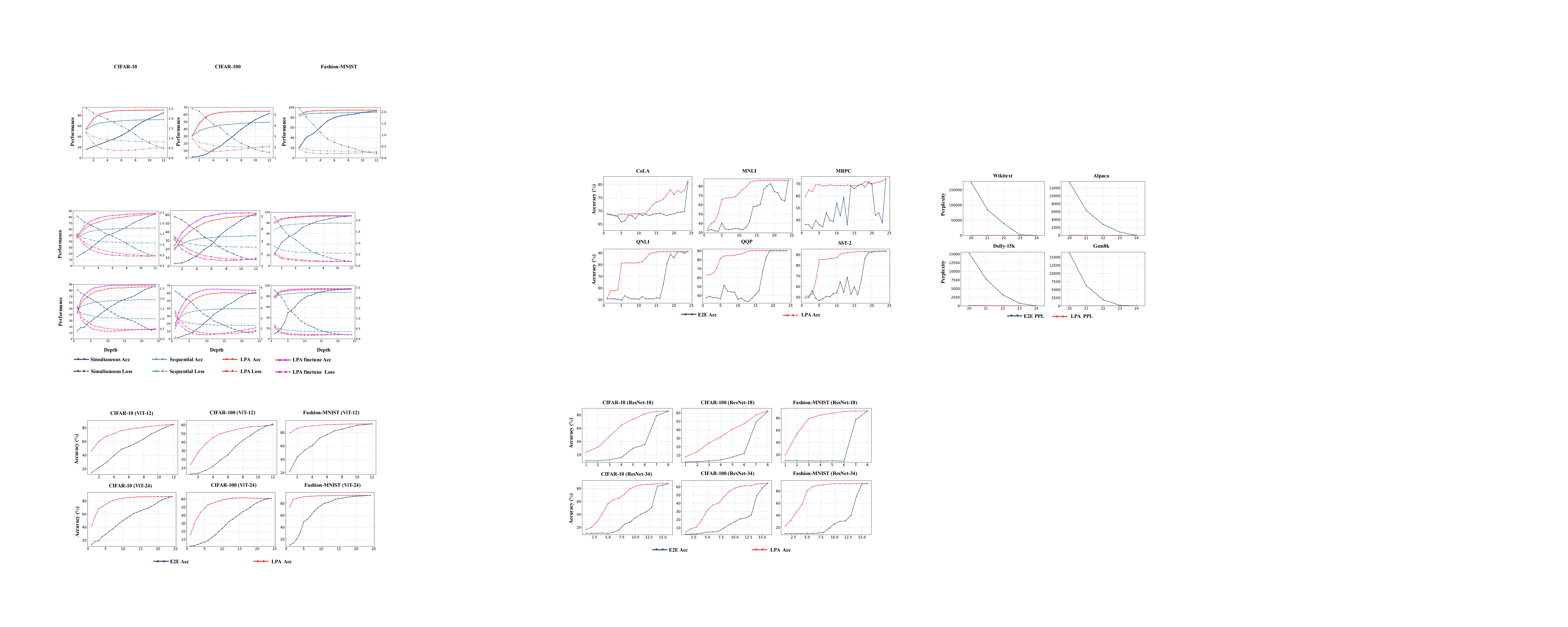} 
   \caption{Layer-wise performance comparison with ResNet. Note: ResNet-18 and ResNet-34 contain 8 and 16 residual blocks, respectively; ``layer" here refers to residual blocks.} 
   \label{fig:resnet_cur} 
\end{figure}

\subsection{Experiments for Image classification}
\label{sec:im_cls}
We evaluate LPA on CIFAR-10, CIFAR-100, and Fashion-MNIST using both ResNet-18 and ResNet-34 (approximately 11.2M and 21.3M parameters) and Vision Transformers with 12 and 24 layers. Each ViT layer uses 3 attention heads of size 192 and an MLP hidden size of 768, yielding models of about 5.5M and 10.9M parameters. ViTs are trained for 300 epochs and ResNets for 50 epochs. 


In Tables~\ref{tab:im_cls_Vit} and \ref{tab:im_cls_CNN},
LPA matches E2E in final accuracy across all settings, indicating that progressive supervision does not degrade performance. 
More importantly, our goal is to verify that LPA induces a progressive approximation trajectory in real datasets, as predicted by theory.


We evaluate layer-wise accuracy at inference by feeding intermediate features through the shared readout. 
As shown in Figures~\ref{fig:vit_cur} and \ref{fig:resnet_cur}, the accuracy generally increases with depth for both methods, but LPA consistently delivers much higher performance in the early layers. (We use ResNet-18 and ResNet-34, which contain 18 and 34 computational layers, respectively. However, they consist of only 8 and 16 residual blocks. Throughout this work, ``layer-wise" refers to these residual blocks. Thus, the curves in Figure~\ref{fig:resnet_cur} correspond to 8 and 16 layers, respectively.). 
In the 24-layer ViT, LPA attains near-optimal accuracy at intermediate depths, around layer 15 on CIFAR-10/100 and layer 10 on Fashion-MNIST, and then refines smoothly. 
In contrast, E2E leaves early layers under-optimized, with peak accuracy only at the final output.

This property has practical implications: users can deploy shallower subnetworks of an LPA-trained model to meet computational or latency constraints, without requiring distillation or retraining. The model is inherently depth-adaptive and has implemented ``train once, use $N$ models".

\paragraph{Scalability test.} We train ViT-12 and ViT-24 from scratch on ImageNet-1k \citep{Deng2009ImageNetAL} with both LPA and E2E. As shown in Table~\ref{tab:vit_imagenet_from_scratch}, LPA matches final accuracy while preserving progressive behavior. For example, ViT-24 with LPA surpasses 80\% top-1 accuracy by layer 14, indicating that strong representations emerge well before full depth.

Finally, we examine whether LPA can fine-tune pre-trained models. In Table~\ref{tab:vit_layerwise_finetune30}, we fine-tune pre-trained ViT-Small and ViT-Base for 30 epochs on Imagenet using LPA. Again, LPA yields significantly better intermediate-layer performance: strong accuracy is achieved early in the network, whereas E2E remains heavily reliant on the final layer. This confirms that the progressive approximation property is not limited to training from scratch but is also capable of finetuning.

\begin{figure}[ht]
   \centering
   \includegraphics[width=0.48\textwidth]{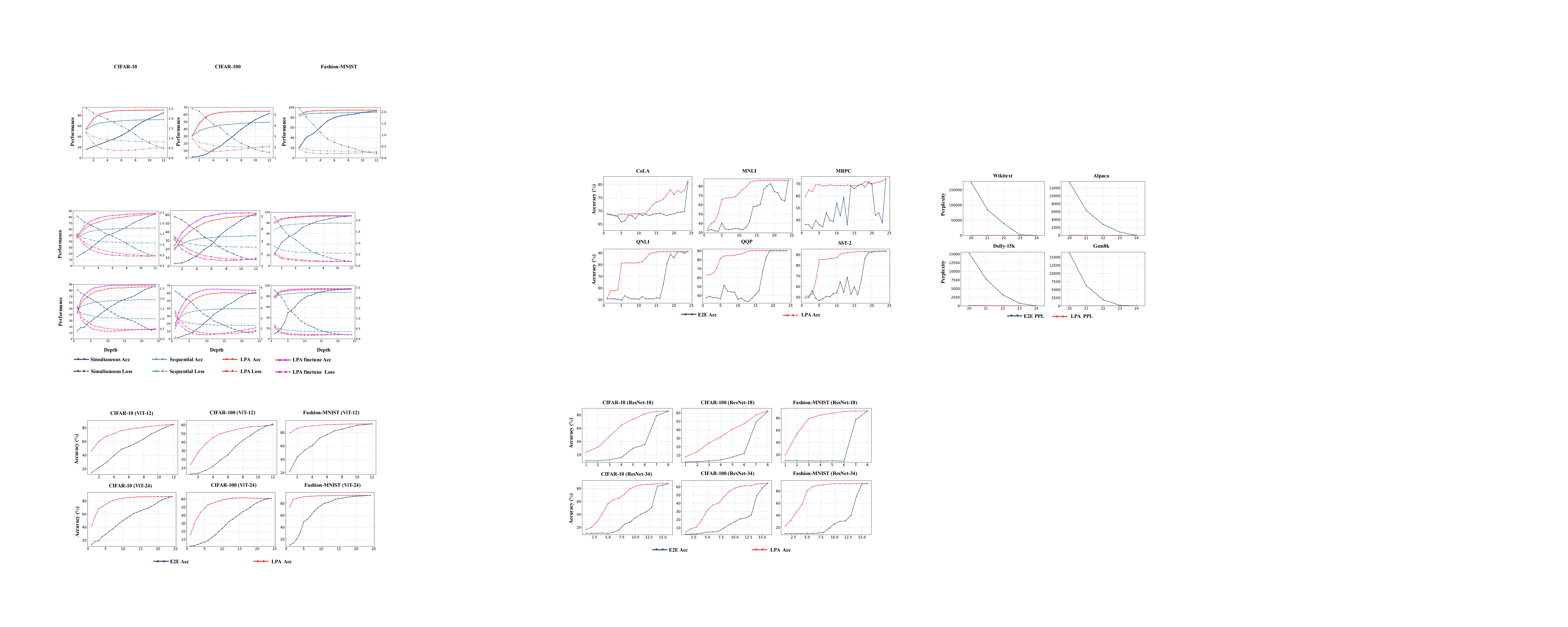} 
   \caption{Layer-wise performance comparison with Qwen. Convergence at early layers appears across all the LLM tasks.} 
   \label{fig:glue_cur} 
\end{figure}

\begin{figure}[ht]
   \centering
   \includegraphics[width=0.4\textwidth]{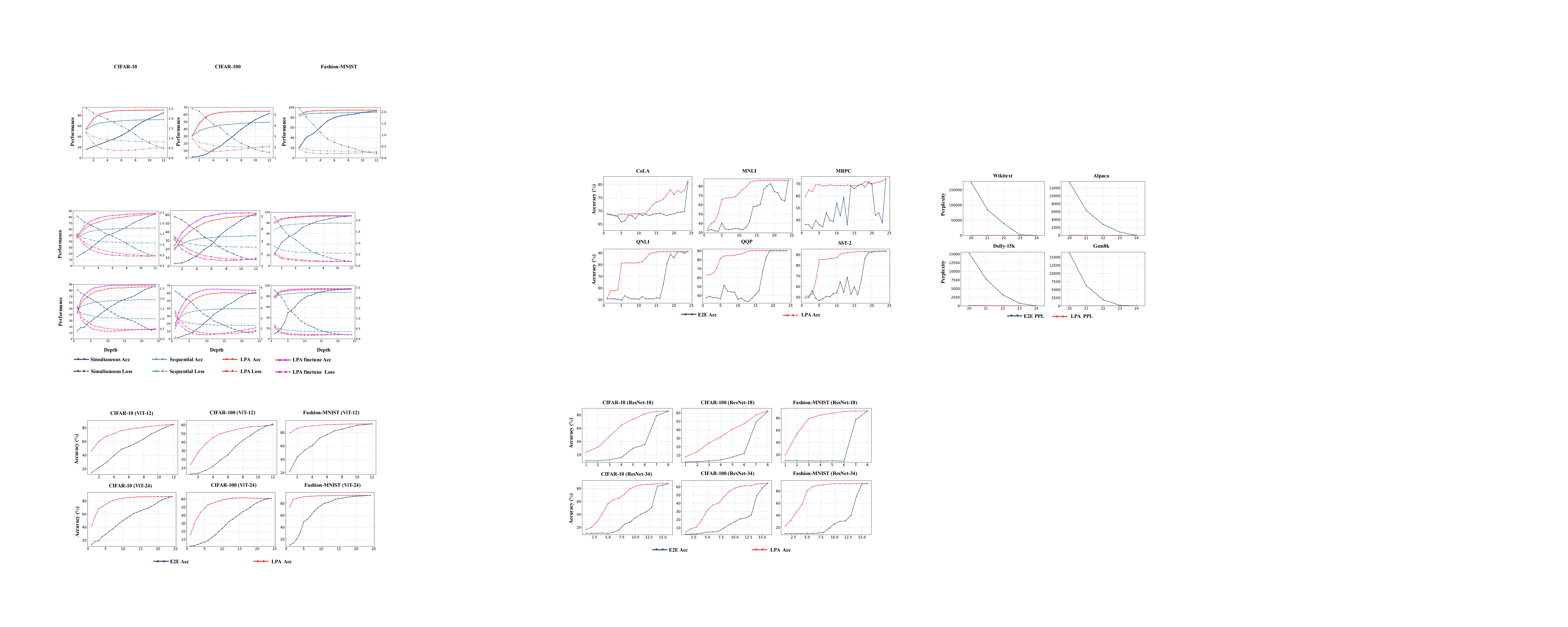} 
   \caption{Layer-wise perplexity comparison with Qwen. Convergence at early layers appears across all the LLM tasks.} 
   \label{fig:LLM_gen} 
\end{figure}

\begin{table*}[ht]
\centering
\caption{Layer-wise accuracy and loss of ViTs under E2E and LPA full training. For each layer, the higher accuracy is \textbf{bolded}.}
\label{tab:vit_imagenet_from_scratch}
\resizebox{0.85\textwidth}{!}{
\begin{tabular}{l|l|c|cccccccccccc}
\noalign{\hrule height 0.8pt}
\multicolumn{2}{c|}{\multirow{2}{*}{\textbf{Model / Method}}} & \multirow{2}{*}{\textbf{Metric}} &
\multicolumn{12}{c}{\textbf{Layers}} \\
\cline{4-15}
\multicolumn{2}{c|}{} & & 
\textbf{L2} & \textbf{L4} & \textbf{L6} & \textbf{L8} & \textbf{L10} & \textbf{L12} &
\textbf{L14} & \textbf{L16} & \textbf{L18} & \textbf{L20} & \textbf{L22} & \textbf{L24} \\
\hline
\multirow{4}{*}{\textbf{ViT-12}} 
& \multirow{2}{*}{\textbf{E2E}} & \textbf{Acc} $\uparrow$ & 0.12 & 0.24 & 0.35 & 5.18 & 33.54 & \textbf{78.42} & -- & -- & -- & -- & -- & -- \\
& & \textbf{Loss} $\downarrow$ & 6.87 & 6.83 & 6.70 & 6.27 & 5.78 & 1.06 & -- & -- & -- & -- & -- & -- \\
\cline{2-15}
& \multirow{2}{*}{\textbf{LPA}} & \textbf{Acc} $\uparrow$ & \textbf{15.95} & \textbf{51.34} & \textbf{66.79} & \textbf{73.32} & \textbf{77.44} & 78.29 & -- & -- & -- & -- & -- & -- \\
& & \textbf{Loss} $\downarrow$ & 4.67 & 2.34 & 1.53 & 1.24 & 1.05 & 1.14 & -- & -- & -- & -- & -- & -- \\
\hline
\multirow{4}{*}{\textbf{ViT-24}} 
& \multirow{2}{*}{\textbf{E2E}} & \textbf{Acc} $\uparrow$ & 0.10 & 0.11 & 0.14 & 0.13 & 0.15 & 0.12 & 0.11 & 0.19 & 2.43 & 12.29 & 49.38 & 76.70 \\
& & \textbf{Loss} $\downarrow$ & 7.01 & 6.918 & 6.89 & 6.87 & 6.875 & 6.876 & 6.874 & 6.84 & 6.57 & 6.30 & 5.11 & 1.30 \\
\cline{2-15}
& \multirow{2}{*}{\textbf{LPA}} & \textbf{Acc} $\uparrow$ & \textbf{42.42} & \textbf{60.37} & \textbf{70.52} & \textbf{74.79} & \textbf{76.42} & \textbf{79.08} & \textbf{80.18} & \textbf{80.51} & \textbf{80.67} & \textbf{80.60} & \textbf{79.90} & \textbf{79.76} \\
& & \textbf{Loss} $\downarrow$ & 2.88 & 1.89 & 1.37 & 1.17 & 1.11 & 1.01 & 1.02 & 1.060 & 1.07 & 1.06 & 1.16 & 1.21 \\
\noalign{\hrule height 0.8pt}
\end{tabular}}
\end{table*}

\begin{table}[ht]
\centering
\caption{Layer-wise accuracy and loss of ViT-Base and ViT-Small under E2E and LPA fine-tuning. For each layer, the higher accuracy between E2E and LPA is \textbf{bolded}.}
\label{tab:vit_layerwise_finetune30}
\resizebox{0.48\textwidth}{!}{
\begin{tabular}{l|l|c|cccccc} 
\noalign{\hrule height 0.8pt}
\multicolumn{2}{c|}{\multirow{2}{*}{\textbf{Model / Method}}} & \multirow{2}{*}{\textbf{Metric}} &
\multicolumn{6}{c}{\textbf{Layers}} \\ 
\cline{4-9}  
\multicolumn{2}{c|}{} & & 
\textbf{L2} & \textbf{L4} & \textbf{L6} & \textbf{L8} & \textbf{L10} & \textbf{L12} \\
\hline
\multirow{4}{*}{\textbf{ViT-Base}} 
& \multirow{2}{*}{\textbf{E2E}} & \textbf{Acc.} $\uparrow$ & 0.15 & 0.10 & 0.20 & 2.546 & 50.15 & \textbf{81.58} \\
& & \textbf{Loss} $\downarrow$ & 7.42 & 7.60 & 7.53 & 6.55 & 2.50 & 1.72 \\
\cline{2-9}
& \multirow{2}{*}{\textbf{LPA}} & \textbf{Acc.} $\uparrow$ & \textbf{5.69} & \textbf{12.012} & \textbf{23.87} & \textbf{47.13} & \textbf{69.95} & 81.56 \\
& & \textbf{Loss} $\downarrow$ & 5.81 & 5.11 & 4.13 & 2.698 & 1.462 & 0.859 \\
\hline
\multirow{4}{*}{\textbf{ViT-Small}} 
& \multirow{2}{*}{\textbf{E2e}} & \textbf{Acc.} $\uparrow$ & 0.10 & 0.10 & 0.10 & 0.63 & 25.69 & 77.08 \\
& & \textbf{Loss} $\downarrow$ & 7.84 & 7.69 & 7.57 & 6.97 & 4.90 & 1.15 \\
\cline{2-9}
& \multirow{2}{*}{\textbf{LPA}} & \textbf{Acc.} $\uparrow$ & \textbf{2.36} & \textbf{9.73} & \textbf{29.95} & \textbf{55.42} & \textbf{74.86} & \textbf{79.46} \\
& & \textbf{Loss} $\downarrow$ & 6.35 & 5.37 & 3.93 & 2.32 & 1.20 & 0.91 \\
\noalign{\hrule height 0.8pt}
\end{tabular}}
\end{table}

\subsection{Experiments for LLM tasks}

To evaluate the generality of progressive approximation, we test Qwen2-0.5B on both discriminative and generative tasks, examining whether layerwise refinement holds in modern transformer LLMs.
Note that our experiments are proof of concept rather than SOTA. In particular, LLM training is resource-intensive and beyond the scope of most labs, so we focus on feasibility rather than peak performance.

\textbf{Classification tasks.}  
We fine-tune Qwen2-0.5B on six GLUE datasets of CoLA, MNLI, MRPC, QNLI, QQP, and SST-2 \citep{Wang2018GLUEAM} with 3 epochs under both training paradigms. 
As shown in Figure~\ref{fig:glue_cur}, LPA yields a clear monotonic rise in layerwise accuracy, consistent with the theoretical progressive trajectory. In contrast, E2E shows fluctuations across layers, suggesting non-cumulative feature development.

\textbf{Generative and instruction-following tasks.}  
Given the higher complexity of generative tasks, we adopt a practical setting where only the final few transformer blocks are supervised in LPA. We evaluate on four diverse datasets: Wikitext (language modeling), Alpaca, Dolly-15k, and GSM8K (instruction-following and reasoning). All models are fine-tuned for 1 epoch.

Figure~\ref{fig:LLM_gen} reports the layer-wise perplexity (PPL) during inference using the shared readout head. Remarkably, LPA achieves substantially lower PPL even at shallow depths, and PPL continues to decrease smoothly as depth increases, demonstrating stable progressive refinement. By contrast, E2E exhibits significantly higher PPL in early and intermediate layers, with improvements concentrated only the output. This suggests that E2E fails to propagate meaningful supervision signals for generation to shallower components of the LLM.

Together, these results confirm that the progressive approximation principle can also be generalized to large-scale language architectures. 

\section{Conclusion}
In this work, we bridge approximation theory and modern deep learning by reinterpreting deep residual networks as a progressive approximation process. We prove that there exist layer-wise trajectories along which the approximation the target progressively with depth. It reveals that deep models can implement structured, step-by-step refinement rather than acting as opaque E2E mappers.
Building on this insight, we propose LPA, a training paradigm that explicitly enforces alignment between each layer and its residual target. Across a diverse range of architectures, including fully connected networks, ResNets, ViTs, and large language models, and tasks spanning scientific function approximation, image classification, and language understanding/generation, we consistently observe that LPA induces stable, cumulative refinement.

Critically, this property enables a powerful deployment paradigm: ``train once, use $N$ models". A single LPA-trained network provides meaningful predictions at every intermediate depth, allowing users to adapt model complexity to computational budgets without distillation, pruning, or retraining.
Our results suggest that progressive approximation is not merely a theoretical curiosity but a practical principle that unifies representation learning across modalities and scales. This offers both a new lens for understanding how deep networks compute and a versatile framework for efficient, depth-adaptive inference in real-world applications.

\section{Impact Statements}
This paper presents work whose goal is to advance the field of machine learning. There are many potential societal consequences of our work, none of which we feel must be specifically highlighted here.

\bibliography{example_paper}
\bibliographystyle{icml2026}

\newpage
\appendix
\onecolumn

\section{Unfold the RN}
\label{sec:UnfoldtheRN}

To analyze the representational structure of residual networks, we unfold the recursive definition into an explicit summation form.
A layer \(i\) of a RN can be expressed as:
\begin{equation}
\mathbf{x}_{i} = \mathbf{x}_{i-1} + \mathbf{W}'_{i} \sigma (\mathbf{W}_{i} \mathbf{x}_{i-1} + \mathbf{b}_{i}) + \mathbf{b}'_{i},
\label{eq:res_general_term}
\end{equation}
where \(\mathbf{x}_{i-1}\) and \(\mathbf{x}_{i}\) denote the input and output of the \(i\)-th residual block, respectively.
Defining the residual mapping as
\begin{equation}
G_i(\mathbf{x}_{i-1}) = \mathbf{W}'_{i} \sigma(\mathbf{W}_{i} \mathbf{x}_{i-1} + \mathbf{b}_{i}) + \mathbf{b}'_{i},
\end{equation}
which is a two-layer FNN, we rewrite Eq.~\eqref{eq:res_general_term} as
\begin{equation}
\mathbf{x}_{i} = \mathbf{x}_{i-1} + G_i(\mathbf{x}_{i-1}).
\end{equation}

Unfolding this recurrence yields:
\begin{align*}
\mathbf{x}_1 &= \mathbf{x}_0 + G_1(\mathbf{x}_0), \\
\mathbf{x}_2 &= \mathbf{x}_1 + G_2(\mathbf{x}_1) = \mathbf{x}_0 + G_1(\mathbf{x}_0) + G_2(\mathbf{x}_1), \\
&\;\;\vdots \\
\mathbf{x}_N &= \mathbf{x}_0 + \sum_{i=1}^{N} G_i(\mathbf{x}_{i-1}).
\end{align*}

\section{Approximation Trajectories}
\label{sec:Approximation_Trajectories}
To prepare for the formal analysis of progressive approximation, we first define the error trajectory induced by a trained residual network.

\begin{definition}[Error Trajectory]
    Let $f: K_0 \to K_N$ denote the target function to be approximated, and let $\mathbf{x}_0 \in K_0$ be an input.
    Consider an $N$-layer residual network that has been trained E2E, with forward recurrence
    \[
        \mathbf{x}_i = \mathbf{x}_{i-1} + G_i^*(\mathbf{x}_{i-1}), \quad i = 1, \dots, N,
    \]
    where each $G_i^*$ denotes the residual mapping after training.
    The approximation error after layer $i$ is defined as
    \begin{equation}
        \varepsilon_i := \big\| f(\mathbf{x}_0) - \mathbf{x}_i \big\|, \quad i = 0, 1, \dots, N.
    \end{equation}
    The sequence $\mathcal{T} = \{\varepsilon_i\}_{i=0}^N$ is called the error trajectory of the trained network on input $\mathbf{x}_0$.
\end{definition}

Note that the superscript $^*$ here signifies the mappings obtained from E2E optimization, not an ideal or optimal decomposition, but the actual functions realized by the trained model. Under the hypothesis of layer-wise progressive approximation, this trajectory would satisfy
\begin{equation}
    \varepsilon_0 \geq \varepsilon_1 \geq \cdots \geq \varepsilon_N,
\end{equation}
meaning that each residual block brings the representation strictly closer to the target.

However, standard end-to-end training imposes no such monotonicity constraint. Instead, all residual mappings $\{G_i^*\}_{i=1}^N$ are jointly optimized solely to minimize the final loss $\varepsilon_N = \|f(\mathbf{x}_0) - \mathbf{x}_N\|$. Consequently, intermediate errors $\varepsilon_i$ for $i < N$ are not directly supervised, and the resulting error trajectory $\mathcal{T}$ may exhibit non-monotonic behavior.

To illustrate, consider:
\begin{itemize}
    \item \textbf{Single-layer RN}: $\mathbf{x}_1 = \mathbf{x}_0 + G_1^*(\mathbf{x}_0)$, so $\varepsilon_1 = \|f(\mathbf{x}_0) - \mathbf{x}_0 - G_1^*(\mathbf{x}_0)\|$. Although $G_1^*$ contributes to reducing $\varepsilon_N$, it may inadvertently increase $\varepsilon_1$ if doing so facilitates later corrections.
    
    \item \textbf{Two-layer RN}: $\mathbf{x}_2 = \mathbf{x}_0 + G_1^*(\mathbf{x}_0) + G_2^*(\mathbf{x}_1)$, and
    \[
        \varepsilon_2 = \big\| f(\mathbf{x}_0) - \mathbf{x}_0 - G_1^*(\mathbf{x}_0) - G_2^*(\mathbf{x}_1) \big\|.
    \]
    It is entirely possible that $\varepsilon_2 < \varepsilon_1$ but $\varepsilon_1 > \varepsilon_0$, or even that $\varepsilon_2 > \varepsilon_1$, depending on how $G_1^*$ and $G_2^*$ co-adapt during training.
\end{itemize}

In general, for $i$ layers,
\[
\varepsilon_i = \Big\| f(\mathbf{x}_0) - \mathbf{x}_0 - \sum_{j=1}^{i} G_j^*(\mathbf{x}_{j-1}) \Big\|,
\]
and the lack of local supervision allows the trajectory $\{\varepsilon_i\}$ to:
\begin{itemize}
    \item temporarily increase due to over-correction or representation reconfiguration;
    \item stagnate when updates become redundant;
    \item decrease only after sufficient representational groundwork has been laid.
\end{itemize}

Thus, standard training does not constrain the approximation trajectory. The actual error trajectory is an emergent property of global optimization.

\section{Residual Networks with Dimension Changes}
\subsection{Residual Networks with Shared Linear Readout}
\label{sec:SharedReadout}
In practical residual networks, the hidden dimension $d$ often differs from the target output dimension $d_y$. To handle this mismatch, a linear readout layer $\mathbf{W}_{N+1} \in \mathbb{R}^{d_y \times d}$ is applied to the final hidden state:
\begin{equation}
\hat{f}_{N+1}(\mathbf{x}_0) = \mathbf{W}_{N+1} \mathbf{x}_N.
\end{equation}
Due to the additive structure of residual connections,
\[
\mathbf{x}_N = \mathbf{x}_0 + G_1(\mathbf{x}_0) + G_2(\mathbf{x}_1) + \cdots + G_N(\mathbf{x}_{N-1}),
\]
the network output can be equivalently expressed as
\begin{equation}
\hat{f}_{N+1}(\mathbf{x}_0) = \mathbf{W}_{N+1} \mathbf{x}_0 + \sum_{i=1}^N \underbrace{\mathbf{W}_{N+1} G_i(\mathbf{x}_{i-1})}_{=: H_i(\mathbf{x}_{i-1})}.
\end{equation}
Thus, each residual block contributes to the final prediction through the same linear map $\mathbf{W}_{N+1}$. This allows us to analyze the approximation process directly in the output space $\mathbb{R}^{d_y}$. Note that each $G_i: \mathbb{R}^d \to \mathbb{R}^d$ is a FNN, and therefore the composition
\[
H_i(\mathbf{x}) := \mathbf{W}_{N+1} G_i(\mathbf{x})
\]
is also a FNN mapping $\mathbb{R}^d \to \mathbb{R}^{d_y}$. By the UAT, such networks can approximate any continuous function on a compact set arbitrarily well, provided sufficient width. Hence, the layer-wise approximation analysis from Section~\ref{subsec:prog_traj} extends naturally to this setting.

We now extend the existence result of Theorem~\ref{th:prog_traj} to the setting where predictions are formed via a shared linear readout $\mathbf{W}_{N+1} \in \mathbb{R}^{d_y \times d}$, i.e., $\hat{f}_i(\mathbf{x}_0) = \mathbf{W}_{N+1} \mathbf{x}_i$, even when $d \neq d_y$. We show that a non-increasing error trajectory in output space still exists.

Let $f: K_0 \to \mathbb{R}^{d_y}$ be continuous on a compact set $K_0 \subset \mathbb{R}^d$, and define the layer-wise prediction as
\[
\hat{f}_i(\mathbf{x}_0) := \mathbf{W}_{N+1} \mathbf{x}_i,
\quad \text{where} \quad
\mathbf{x}_i = \mathbf{x}_{i-1} + G_i(\mathbf{x}_{i-1}), \quad \mathbf{x}_0 \in K_0.
\]
The approximation error at depth $i$ is
\[
\varepsilon_i := \sup_{\mathbf{x}_0 \in K_0} \big\| f(\mathbf{x}_0) - \hat{f}_i(\mathbf{x}_0) \big\|.
\]

We construct $\{G_i\}_{i=1}^N$ inductively such that $\varepsilon_1 \geq \varepsilon_2 \geq \cdots \geq \varepsilon_N \geq 0$.

\textbf{Base case ($i=1$).}  
Define the initial prediction $\hat{f}_0(\mathbf{x}_0) = \mathbf{W}_{N+1} \mathbf{x}_0$ and the first residual target
\[
f_1(\mathbf{x}_0) := f(\mathbf{x}_0) - \hat{f}_0(\mathbf{x}_0).
\]
Since $f_1$ is continuous on the compact set $K_0$, the UAT guarantees the existence of a FNN $G_1^*: \mathbb{R}^d \to \mathbb{R}^d$ such that the induced correction
\[
H_1^*(\mathbf{x}_0) := \mathbf{W}_{N+1} G_1^*(\mathbf{x}_0)
\]
satisfies
\[
\sup_{\mathbf{x}_0 \in K_0} \big\| f_1(\mathbf{x}_0) - H_1^*(\mathbf{x}_0) \big\| < \varepsilon_1,
\]
for some $\varepsilon_1 \geq 0$. Setting $\mathbf{x}_1 = \mathbf{x}_0 + G_1^*(\mathbf{x}_0)$, we obtain
\[
\hat{f}_1(\mathbf{x}_0) = \mathbf{W}_{N+1} \mathbf{x}_1 = \hat{f}_0(\mathbf{x}_0) + H_1^*(\mathbf{x}_0),
\]
and thus $\varepsilon_1 = \sup_{\mathbf{x}_0} \|f(\mathbf{x}_0) - \hat{f}_1(\mathbf{x}_0)\|$ is well-defined.

\textbf{Inductive step.}  
Assume that after $i-1$ layers, we have constructed $\{G_j^*\}_{j=1}^{i-1}$ such that
\[
\hat{f}_{i-1}(\mathbf{x}_0) = \mathbf{W}_{N+1} \mathbf{x}_{i-1}
= \mathbf{W}_{N+1} \mathbf{x}_0 + \sum_{j=1}^{i-1} H_j^*(\mathbf{x}_{j-1}),
\]
with error $\varepsilon_{i-1} = \sup_{\mathbf{x}_0} \|f(\mathbf{x}_0) - \hat{f}_{i-1}(\mathbf{x}_0)\|$. Define the remaining residual
\[
f_i(\mathbf{x}_0) := f(\mathbf{x}_0) - \hat{f}_{i-1}(\mathbf{x}_0).
\]
Note that $f_i$ is a continuous function of $\mathbf{x}_0$, and since $\mathbf{x}_{i-1}$ is a continuous function of $\mathbf{x}_0$ (as a composition of continuous maps), we may view $f_i$ as a continuous function of $\mathbf{x}_{i-1}$ over the compact image set $K_{i-1} := \{\mathbf{x}_{i-1}(\mathbf{x}_0) : \mathbf{x}_0 \in K_0\} \subset \mathbb{R}^d$.

By the UAT, there exists a network $G_i^*: \mathbb{R}^d \to \mathbb{R}^d$ such that the correction
\[
H_i^*(\mathbf{x}_{i-1}) := \mathbf{W}_{N+1} G_i^*(\mathbf{x}_{i-1})
\]
approximates $f_i$ arbitrarily well:
\[
\sup_{\mathbf{x}_0 \in K_0} \big\| f_i(\mathbf{x}_0) - H_i^*(\mathbf{x}_{i-1}) \big\| < \varepsilon_i,
\]
for some $\varepsilon_i \geq 0$. Crucially, we can always choose $G_i^*$ such that $\varepsilon_i \leq \varepsilon_{i-1}$—for instance, by selecting $G_i^* \equiv 0$ yields $\varepsilon_i = \varepsilon_{i-1}$, while any better approximator gives $\varepsilon_i < \varepsilon_{i-1}$.

Setting $\mathbf{x}_i = \mathbf{x}_{i-1} + G_i^*(\mathbf{x}_{i-1})$, the new prediction becomes
\[
\hat{f}_i(\mathbf{x}_0) = \hat{f}_{i-1}(\mathbf{x}_0) + H_i^*(\mathbf{x}_{i-1}),
\]
and the error satisfies $\varepsilon_i \leq \varepsilon_{i-1}$.

By induction, there exists a sequence $\{G_i^*\}_{i=1}^N$ such that
\[
\varepsilon_1 \geq \varepsilon_2 \geq \cdots \geq \varepsilon_N \geq 0.
\]

This establishes that progressive approximation holds in the output space even under a shared linear readout. The key requirement is that each residual $f_i$ remains continuous in the current representation $\mathbf{x}_{i-1}$, which is guaranteed by the continuity of the network up to layer $i-1$ and the compactness of $K_0$. The shared matrix $\mathbf{W}_{N+1}$ merely defines a fixed linear projection into $\mathbb{R}^{d_y}$ and does not impede the layer-wise refinement enabled by the residual structure and UAT.


\subsection{Layer-wise Representation under Intermediate Dimension Changes}
\label{sec:dim_change_proof}

Consider a residual network with possible dimension changes across layers. Let the forward recurrence be
\[
\mathbf{x}_i = \mathbf{W}_{i-1}^{\mathrm{proj}} \mathbf{x}_{i-1} + G_i(\mathbf{x}_{i-1}), \quad i = 1, \dots, N,
\]
where $\mathbf{x}_i \in \mathbb{R}^{d_i}$, $\mathbf{W}_{i-1}^{\mathrm{proj}} \in \mathbb{R}^{d_i \times d_{i-1}}$ is a linear projection (identity if $d_i = d_{i-1}$), and $G_i: \mathbb{R}^{d_{i-1}} \to \mathbb{R}^{d_i}$ is the residual function.

Let the target be $f(\mathbf{x}_0) \in \mathbb{R}^{d_y}$. To define layer-wise predictions in a common output space despite heterogeneous intermediate dimensions, we use the full linear path from layer $i$ to the output. Specifically, define the post-$i$ linear operator
\[
\widetilde{\mathbf{W}}_i := \mathbf{W}_{N+1} \mathbf{W}_{N-1}^{\mathrm{proj}} \mathbf{W}_{N-2}^{\mathrm{proj}} \cdots \mathbf{W}_{i}^{\mathrm{proj}} \in \mathbb{R}^{d_y \times d_i},
\]
where $\mathbf{W}_{N+1} \in \mathbb{R}^{d_y \times d_N}$ is a learnable readout matrix at  the last layer. The prediction at depth $i$ is then defined as
\[
\hat{f}_i(\mathbf{x}_0) := \widetilde{\mathbf{W}}_i \mathbf{x}_i.
\]
Note that $\mathbf{W}_{N+1}$ and all $\{\mathbf{W}_{j}^{\mathrm{proj}}\}$ are jointly optimized during training.

Unrolling the recurrence up to layer $i$, we obtain
\[
\mathbf{x}_i = \mathbf{W}_{i-1}^{\mathrm{proj}} \cdots \mathbf{W}_{0}^{\mathrm{proj}} \mathbf{x}_0 
+ \sum_{j=1}^i \left( \mathbf{W}_{i-1}^{\mathrm{proj}} \cdots \mathbf{W}_{j}^{\mathrm{proj}} \right) G_j(\mathbf{x}_{j-1}),
\]
with the convention that an empty product equals the identity. Applying $\widetilde{\mathbf{W}}_i$ yields
\begin{equation}
\begin{aligned}
\hat{f}_i(\mathbf{x}_0) 
&= \widetilde{\mathbf{W}}_i \mathbf{x}_i \\
&= \underbrace{\mathbf{W}_{N+1} \mathbf{W}_{N-1}^{\mathrm{proj}} \cdots \mathbf{W}_{0}^{\mathrm{proj}}}_{=: \widetilde{\mathbf{W}}_0} \mathbf{x}_0 
+ \sum_{j=1}^i \underbrace{\mathbf{W}_{N+1} \mathbf{W}_{N-1}^{\mathrm{proj}} \cdots \mathbf{W}_{j}^{\mathrm{proj}} G_j}_{=: \mathbf{H}_j}(\mathbf{x}_{j-1}) \\
&= \widetilde{\mathbf{W}}_0 \mathbf{x}_0 + \sum_{j=1}^i H_j(\mathbf{x}_{j-1}),
\end{aligned}
\label{eq:layer_rep_general}
\end{equation}
where each $H_j(\cdot) := \mathbf{W}_{N+1} \mathbf{W}_{N-1}^{\mathrm{proj}} \cdots \mathbf{W}_{j}^{\mathrm{proj}} G_j: \mathbb{R}^{d_{j-1}} \to \mathbb{R}^{d_y}$ represents the contribution of block $j$ in the target output space.

Since all $\mathbf{W}_j$ are linear maps , the entire expression is a well-defined sum in $\mathbb{R}^{d_y}$. Moreover, because matrix multiplication is associative, each $\widetilde{\mathbf{W}}_i$ can be fused into a single matrix during inference. This ensures that the sequence $\{\hat{f}_i(\mathbf{x}_0)\}_{i=0}^N$ lies in a common space $\mathbb{R}^{d_y}$, enabling a consistent error trajectory $\varepsilon_i = \| f(\mathbf{x}_0) - \hat{f}_i(\mathbf{x}_0) \|$ even when intermediate dimensions vary.

\section{Data for Complex Surface Fitting}
\label{sec:Curve_Fitting_data}
We generate 6 synthetic datasets using functions as follows. There are 20,000 input-output pairs for each target function (10,000 pairs for training and 10,000 pairs for testing). 
\begin{itemize}
\item $ f(x, y) = \big( |x| ^{0.7} + |y| ^{1.3}\big) \sin(4x) $ (Asymmetric power-modulated waves): 
A distorted oscillatory surface with non-integer, directionally unbalanced amplitude envelopes; tests the model’s ability to fit asymmetric scaling combined with linear-frequency sine modulation.

\item $ f(x, y) = e^{-5(x^2 + y^2)} \sin(10x) $ (Localized high-frequency stripes): A sharply localized Gaussian window containing multiple rapid oscillations along the \(x\)-axis; evaluates joint modeling of spatial confinement and fine-scale directional features within a bounded domain.

\item $ f(x, y) = e^{x^2 - y^2} $ (Exponential saddle): A gently varying saddle-shaped surface with exponential contrast between orthogonal axes; assesses capture of hyperbolic structure under mild.

\item $ f(x, y) = \log(x^2 + y^2 + 10^{-5}) \cos(5x) $ (Logarithmic central well): An oscillating surface featuring a deep, narrow depression at the origin due to a regularized logarithmic singularity; probes robustness to near-singular radial potentials coupled with transverse cosine waves.

\item $ f(x, y) = \sin(3x) \cos(3y) $ (Standing wave grid): A regular checkerboard-like interference pattern with uniform spatial frequency; serves as a baseline for testing periodicity, orthogonality, and separable oscillation representation.

\item $ f(x, y) = \dfrac{\sin(4x^2 + y^2)}{\sqrt{x^2 + y^2 + 0.001}} $ (Anisotropic radial oscillations): A directionally asymmetric ripple pattern whose frequency increases quadratically away from the origin, modulated by a near-singular radial amplitude; challenges modeling of anisotropic behavior with weak singularity regularization.

\item $ f(x, y) = \sin(5x^2) + \cos(3y^2) $ (Quadratically warped oscillations): 
Non-uniform spatial frequencies that increase with distance from the origin in each axis; tests the ability to learn separable, nonlinearly warped oscillatory structures.

\item $ f(x, y) = \dfrac{\sin\!\big(10(x^2 + y^2)\big)}{x^2 + y^2 + 0.1} $ (Damped concentric ripples):
A smooth, circularly symmetric oscillatory surface with amplitude decaying radially; evaluates handling of rotational symmetry and damped high-frequency content in a compact domain.
\end{itemize}

\section{Unifing the Representation of Resiudal CNN and Transformer}
\label{apd:sec:unify_RNs}
As we establish early in the main text, if a residual network layer can be written as Eq.~(\ref{eq:res_general_term})
\begin{equation}
\mathbf{x}_{i} = \mathbf{x}_{i-1} + \mathbf{W}'_{i} \sigma (\mathbf{W}_{i} \mathbf{x}_{i-1} + \mathbf{b}_{i}) + \mathbf{b}'_{i},
\end{equation}
then the layer-wise UAT holds, and LPA is applicable.
In this section, we detail how CNNs and Transformers can be expressed in this form.

\subsection{Matrix and Vecotr Representations}
Before diving into the details, we emphasize that in the equation, $\mathbf{W}'_{i}$ and $\mathbf{W}_{i}$ are weight matricies, while the $\mathbf{x}_{i}$, $\mathbf{x}_{i-1}$, $\mathbf{b}_{i}$ and $\mathbf{b}'_{i}$ are vectors.
In the derivations below, all products strictly follow the standard definitions of linear algebra and matrix analysis, from a mathematical standpoint, rather than the simplified conventions often used in machine learning to describe inputs, outputs, and weights.

\subsection{Conversion for CNNs}
It is evident that convolution operations can be equivalently expressed in matrix-vector form. This transformation allows us to interpret the convolution process as applying a structured, sparse matrix (derived from the convolutional kernel) to the input vector. In this formulation, each row of the matrix corresponds to a local receptive field, and the sparsity pattern encodes the weight sharing and local connectivity inherent in convolutional layers. This perspective not only unifies convolution with linear transformations but also highlights its role in hierarchically extracting spatial features from the input data. For a detailed theoretical treatment of this equivalence and its implications for representational capacity refer to \cite{zhou2020universality}. Based on the matrix-vector form of convolution, it is easy to know that each layer of a multi-layer residual based CNN can be expressed as Eq.~(\ref{eq:res_general_term}).

\subsection{Conversion for Transformers}
\label{sec:CFT}
Representing Transformer operations in matrix–vector form is more involved, since they are typically expressed in matrix form (e.g., $\mathbf{Y}=\mathbf{W}\mathbf{X}$).
Let us define a basic conversion $\mathbf{x}=vec(\mathbf{X})$ and $\mathbf{y}=vec(\mathbf{Y})$, where $vec(\cdot)$ denotes vectorization.
\begin{equation}
\begin{aligned}
\mathbf{X}=&
\begin{vmatrix}
x_{11} & x_{12} & \cdots & x_{1n}\\
x_{21} & x_{22} & \cdots & x_{2n}\\
\vdots & \vdots & \vdots & \vdots\\
x_{m1} & x_{m2} & \cdots & x_{mn}\\
\end{vmatrix}_{m\times n}\\
\Rightarrow
\mathbf{x}=&
\left[
\begin{array}{ccccccccccccc}
x_{11} & x_{21} & \cdots & x_{m1} & x_{12} & x_{22} & \cdots & x_{m2} & \cdots & x_{1n} & x_{2n} & \cdots & x_{mn}
\end{array}
\right]^\top_{1 \times mn}.
\end{aligned}
\end{equation}
However, as we will soon see, this alone does not suffice to translate products like $\mathbf{Y}=\mathbf{W}\mathbf{X}$, $\mathbf{Y}=\mathbf{X}\mathbf{W}$, or $\mathbf{Y}=\mathbf{W}\mathbf{X}\mathbf{W}$ into equivalent matrix–vector expressions. 
We first need additional, more fundamental conversion tools.

\subsubsection{Fundemental Conversion Tools}

\begin{theorem}[\textbf{Right Multiplication Conversion to Matrix-Vector Form}]
\label{th:right_conv}
With $\mathbf{X}\in\mathbb{R}^{m\times n}, \mathbf{Y}\in\mathbb{R}^{m\times p}, \mathbf{W}\in\mathbb{R}^{n\times p}$, the right matrix multiplicaion $\mathbf{Y} = \mathbf{X} \mathbf{W}$ can be converted into the form of $\mathbf{y} = \mathbf{W}'\mathbf{x}$
where $\mathbf{W}'= \mathbf{W}^\top \otimes \mathbf{I}_m$ with  $\mathbf{I}_m\in \mathbb{R}^{m \times m}$ representing a unit matrix.
\end{theorem}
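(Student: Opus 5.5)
The identity is checked entrywise, using the column-stacking convention for $vec(\cdot)$ defined just above. Under that convention, the entry $x_{ij}$ of $\mathbf{X}\in\mathbb{R}^{m\times n}$ sits at position $(j-1)m+i$ of $\mathbf{x}=vec(\mathbf{X})$. Likewise, $y_{ik}$ of $\mathbf{Y}\in\mathbb{R}^{m\times p}$ sits at position $(k-1)m+i$ of $\mathbf{y}$.

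The cleanest route is block-by-block. Partition $\mathbf{x}$ into $n$ blocks of length $m$; the $j$-th block is the $j$-th column $\mathbf{X}_{:,j}$. Partition $\mathbf{y}$ into $p$ blocks of length $m$; the $k$-th block is $\mathbf{Y}_{:,k}$.

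From $\mathbf{Y}=\mathbf{X}\mathbf{W}$, the $k$-th column of $\mathbf{Y}$ is a linear combination of the columns of $\mathbf{X}$:
\[
\mathbf{Y}_{:,k}=\sum_{j=1}^{n} w_{jk}\,\mathbf{X}_{:,j}=\sum_{j=1}^{n} \big(w_{jk}\mathbf{I}_m\big)\mathbf{X}_{:,j}.
\]
This says that $\mathbf{y}=\mathbf{W}'\mathbf{x}$, where $\mathbf{W}'$ is the $p\times n$ block matrix whose $(k,j)$ block is $w_{jk}\mathbf{I}_m$.

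Next, recall the definition of the Kronecker product: $\mathbf{A}\otimes\mathbf{B}$ has $(k,j)$ block $a_{kj}\mathbf{B}$. Taking $\mathbf{A}=\mathbf{W}^\top$, whose $(k,j)$ entry is $w_{jk}$, and $\mathbf{B}=\mathbf{I}_m$ shows that this block matrix is exactly $\mathbf{W}^\top\otimes\mathbf{I}_m$. It has size $pm\times nm$, as required.

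The only real obstacle is bookkeeping. The argument works only because the vectorization stacks columns. With row stacking, the formula would instead become $\mathbf{I}_m\otimes\mathbf{W}^\top$. So I will state the convention explicitly and confirm the index map $(i,j)\mapsto (j-1)m+i$.

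As a sanity check, I would compare a single scalar entry. Row $(k-1)m+i$ of $\mathbf{W}'$ applied to $\mathbf{x}$ gives $\sum_j w_{jk}x_{ij}$, which equals $y_{ik}$.
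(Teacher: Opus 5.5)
Your proof is correct, but it takes a more elementary route than the paper's. The paper proves the statement in one line: it writes $\mathbf{X}\mathbf{W}=\mathbf{I}_m\mathbf{X}\mathbf{W}$ and applies the general identity $vec(\mathbf{A}\mathbf{B}\mathbf{C})=(\mathbf{C}^\top\otimes\mathbf{A})\,vec(\mathbf{B})$ with $\mathbf{A}=\mathbf{I}_m$, $\mathbf{B}=\mathbf{X}$, $\mathbf{C}=\mathbf{W}$, taking that identity as known. You instead check the special case directly from the definitions. You write each column of $\mathbf{Y}$ as $\sum_j w_{jk}\mathbf{X}_{:,j}$, read off the block matrix with $(k,j)$ block $w_{jk}\mathbf{I}_m$, and match it to the block definition of $\mathbf{W}^\top\otimes\mathbf{I}_m$. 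In effect, you prove the cited identity in the case $\mathbf{A}=\mathbf{I}_m$. Your version is self-contained, and it makes explicit the one point that can actually go wrong, namely the stacking convention. The paper's cited identity silently assumes column stacking, which agrees with its displayed definition of $vec$. Your remark that row stacking would give $\mathbf{I}_m\otimes\mathbf{W}^\top$ is correct. The paper's version buys economy and reuse: the same identity immediately gives the left-multiplication case $\mathbf{I}_n\otimes\mathbf{W}$ and the bi-directional case $\mathbf{W}_R^\top\otimes\mathbf{W}_L$. Your method would need a separate, though equally easy, column-by-column computation for each of those.
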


\begin{proof}
Starting with
\begin{equation}
    \mathbf{Y}_{m\times p} = \mathbf{X}_{m\times n} \mathbf{W}_{n\times p},
\end{equation}
we can write 
\begin{align}
    vec(\mathbf{Y}_{m\times p})&=vec(\mathbf{X}_{m\times n} \mathbf{W}_{n\times p})\nonumber\\
    &=vec(\mathbf{I}_{m\times m}\mathbf{X}_{m\times n} \mathbf{W}_{n\times p})\nonumber\\
    &=(\mathbf{W}^\top \otimes \mathbf{I}_m) vec(\mathbf{X})\label{eq:vect_prop}\\
    &=(\mathbf{W}^\top \otimes \mathbf{I}_m) \mathbf{x}\nonumber\\
    &=\mathbf{W}'\mathbf{x}
\end{align}
where $\otimes$ is Kronecker product, and
$\mathbf{W}' = \mathbf{W}^\top \otimes \mathbf{I}_m$.
In Eq.~(\ref{eq:vect_prop}), Vectorization Property, which states $vec(\mathbf{ABC})=(\mathbf{C}^\top\otimes\mathbf{A})vec(\mathbf{B})$ is used.
\end{proof}

In a more general case, we often have a bias term as $\mathbf{Y} = \mathbf{X} \mathbf{W} + \mathbf{B}_{1\times m}$. 
It is easy to see that its vertor form is:
$$
\mathbf{y} = \mathbf{W}' \mathbf{x} + \mathbf{b},
$$
where $\mathbf{b}=\mathbf{B}\otimes \mathbf{1}_p$ and $\mathbf{1}_m \in \mathbb{R}^{m \times 1}$ is a column vector of all ones.

\begin{theorem}[\textbf{Left Multiplication Conversion to Matrix-Vector Form}]
\label{right_conv}
With $\mathbf{X}\in\mathbb{R}^{m\times n}, \mathbf{Y}\in\mathbb{R}^{p\times n}, \mathbf{W}\in\mathbb{R}^{p\times m}$, the left matrix multiplicaion $\mathbf{Y} = \mathbf{W} \mathbf{X} $ can be converted into the form of $\mathbf{y} = \mathbf{W}'\mathbf{x}$
where $\mathbf{W}'=\mathbf{I}_n \otimes \mathbf{W} $ with  $\mathbf{I}_n\in \mathbb{R}^{n \times n}$ representing a unit matrix.
\end{theorem}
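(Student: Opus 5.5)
The statement is the Left Multiplication Conversion to Matrix-Vector Form: for $\mathbf{Y} = \mathbf{W}\mathbf{X}$ with $\mathbf{X}\in\mathbb{R}^{m\times n}$, $\mathbf{W}\in\mathbb{R}^{p\times m}$, we have $vec(\mathbf{Y}) = (\mathbf{I}_n\otimes\mathbf{W})\,vec(\mathbf{X})$. I would mirror the proof of Theorem~\ref{th:right_conv}. First insert an identity on the right, writing $\mathbf{Y} = \mathbf{W}\mathbf{X}\mathbf{I}_n$. Then apply the Vectorization Property $vec(\mathbf{A}\mathbf{B}\mathbf{C}) = (\mathbf{C}^\top\otimes\mathbf{A})\,vec(\mathbf{B})$ with $\mathbf{A}=\mathbf{W}$, $\mathbf{B}=\mathbf{X}$ and $\mathbf{C}=\mathbf{I}_n$. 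Since $\mathbf{I}_n^\top=\mathbf{I}_n$, this gives $vec(\mathbf{Y}) = (\mathbf{I}_n\otimes\mathbf{W})\,\mathbf{x}$, so we set $\mathbf{W}'=\mathbf{I}_n\otimes\mathbf{W}$.

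The dimensions are consistent. $\mathbf{I}_n\otimes\mathbf{W}$ is $np\times nm$, $\mathbf{x}=vec(\mathbf{X})$ has length $mn$, and $\mathbf{y}=vec(\mathbf{Y})$ has length $pn$.

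To make the argument self-contained rather than relying on the cited property, I would also verify it directly from the column-stacking convention defined above. The $j$-th column of $\mathbf{Y}$ is $\mathbf{W}$ times the $j$-th column of $\mathbf{X}$. So $vec(\mathbf{Y})$ stacks the blocks $\mathbf{W}\mathbf{x}_{:,j}$ for $j=1,\dots,n$. This is exactly the action of the block-diagonal matrix $\mathrm{diag}(\mathbf{W},\dots,\mathbf{W})=\mathbf{I}_n\otimes\mathbf{W}$ on the stacked columns of $\mathbf{X}$.

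The only real point of care is matching the Kronecker convention to the column-major $vec$ used in the paper. Once block $j$ is correctly identified with column $j$, the rest is routine. A bias term $\mathbf{W}\mathbf{X}+\mathbf{B}$ would be handled the same way as after Theorem~\ref{th:right_conv}, by vectorizing $\mathbf{B}$ into $\mathbf{b}$.
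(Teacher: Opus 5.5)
Your proposal is correct and follows the paper's proof: you write $\mathbf{Y}=\mathbf{W}\mathbf{X}\mathbf{I}_n$ and apply $vec(\mathbf{A}\mathbf{B}\mathbf{C})=(\mathbf{C}^\top\otimes\mathbf{A})\,vec(\mathbf{B})$ with $\mathbf{C}=\mathbf{I}_n$ to obtain $\mathbf{W}'=\mathbf{I}_n\otimes\mathbf{W}$. Your added column-by-column check, which identifies $\mathbf{I}_n\otimes\mathbf{W}$ as the block-diagonal matrix with $n$ copies of $\mathbf{W}$, is a valid self-contained verification of the cited identity that the paper omits.
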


\begin{proof}
Starting with
\begin{equation}
    \mathbf{Y}_{p\times n} = \mathbf{W}_{p\times m}\mathbf{X}_{m\times n} ,
\end{equation}
we can write 
\begin{align}
    vec(\mathbf{Y}_{p\times n})&=vec(\mathbf{W}_{p\times m}\mathbf{X}_{m\times n} )\nonumber\\
    &=vec(\mathbf{W}_{p\times m}\mathbf{X}_{m\times n} \mathbf{I}_{n})\nonumber\\
    &=(\mathbf{I}_{n} \otimes \mathbf{W}_{p\times m}) vec(\mathbf{X})\label{eq:vect_prop_left}\\
    &=(\mathbf{I}_{n} \otimes \mathbf{W}_{p\times m}) \mathbf{x}\nonumber\\
    &=\mathbf{W}'\mathbf{x}
\end{align}
where 
$\mathbf{W}' = \mathbf{I}_{n} \otimes \mathbf{W}$.
In Eq.~(\ref{eq:vect_prop_left}), Vectorization Property is used.
\end{proof}

\begin{theorem}[\textbf{Bi-Directional Multiplication Conversion to Matrix-Vector Form}]
\label{bi_conv}
With $\mathbf{X}\in\mathbb{R}^{n\times p}, \mathbf{Y}\in\mathbb{R}^{m\times q}, \mathbf{W}_L\in\mathbb{R}^{m\times n}, \mathbf{W}_L\in\mathbb{R}^{p\times q}$, the matrix multiplicaion $\mathbf{Y} = \mathbf{W}_L \mathbf{X} \mathbf{W}_R$ can be converted into the form of $\mathbf{y} = \mathbf{W}'\mathbf{x}$
where $\mathbf{W}'=\mathbf{W}_R^\top \otimes \mathbf{W}_L$.
\end{theorem}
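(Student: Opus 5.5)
The plan is to reduce the statement to the vectorization identity $vec(\mathbf{ABC})=(\mathbf{C}^\top\otimes\mathbf{A})\,vec(\mathbf{B})$, already used in the proofs of Theorem~\ref{th:right_conv} and Theorem~\ref{right_conv}. Alternatively, we can compose those two theorems. I will take the compositional route as the main argument, since it relies only on results stated earlier, and then remark that the direct identity gives the same result in one line. Throughout I read the second factor as $\mathbf{W}_R\in\mathbb{R}^{p\times q}$; the statement's second ``$\mathbf{W}_L$'' is a typo. With this reading the product $\mathbf{W}_L\mathbf{X}\mathbf{W}_R$ is $m\times q$, as claimed.

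First, I set $\mathbf{Z}:=\mathbf{W}_L\mathbf{X}\in\mathbb{R}^{m\times p}$. Applying Theorem~\ref{right_conv} with the dimensions renamed (left factor $m\times n$, $\mathbf{X}$ of size $n\times p$) gives $vec(\mathbf{Z})=(\mathbf{I}_p\otimes\mathbf{W}_L)\,\mathbf{x}$. Second, since $\mathbf{Y}=\mathbf{Z}\mathbf{W}_R$, Theorem~\ref{th:right_conv} (with $\mathbf{Z}\in\mathbb{R}^{m\times p}$ and $\mathbf{W}_R\in\mathbb{R}^{p\times q}$) gives $\mathbf{y}=(\mathbf{W}_R^\top\otimes\mathbf{I}_m)\,vec(\mathbf{Z})$. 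Combining the two steps,
\[
\mathbf{y}=(\mathbf{W}_R^\top\otimes\mathbf{I}_m)(\mathbf{I}_p\otimes\mathbf{W}_L)\,\mathbf{x}.
\]

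Third, I collapse the product using the mixed-product property $(\mathbf{A}\otimes\mathbf{B})(\mathbf{C}\otimes\mathbf{D})=(\mathbf{A}\mathbf{C})\otimes(\mathbf{B}\mathbf{D})$. This gives $(\mathbf{W}_R^\top\mathbf{I}_p)\otimes(\mathbf{I}_m\mathbf{W}_L)=\mathbf{W}_R^\top\otimes\mathbf{W}_L=\mathbf{W}'$, which is the claimed matrix. As a consistency check, $\mathbf{W}'$ has size $qm\times pn$, matching $\mathbf{y}\in\mathbb{R}^{mq}$ and $\mathbf{x}\in\mathbb{R}^{np}$.

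The only real obstacle is bookkeeping. The two earlier theorems must be instantiated with correctly renamed dimensions. In addition, the mixed-product property needs conformable factors: $\mathbf{W}_R^\top$ is $q\times p$ and $\mathbf{I}_p$ is $p\times p$, while $\mathbf{I}_m$ is $m\times m$ and $\mathbf{W}_L$ is $m\times n$, so both pairs conform. If bias terms are present, they add a vectorized bias exactly as in the remark following Theorem~\ref{th:right_conv}.
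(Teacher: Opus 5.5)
Your proposal is correct and matches the paper's proof essentially step for step. Like the paper, you factor through $\mathbf{Z}=\mathbf{W}_L\mathbf{X}$, apply the left- and right-multiplication conversions to get $(\mathbf{W}_R^\top\otimes\mathbf{I}_m)(\mathbf{I}_p\otimes\mathbf{W}_L)$, and collapse this to $\mathbf{W}_R^\top\otimes\mathbf{W}_L$ by the mixed-product property; your reading of the second $\mathbf{W}_L$ as $\mathbf{W}_R\in\mathbb{R}^{p\times q}$ is the intended one.
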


\begin{proof}
Starting with
\begin{equation}
    \mathbf{Y} = \mathbf{W}_L \mathbf{X} \mathbf{W}_R,
\end{equation}
where $\mathbf{W}_L \in \mathbb{R}^{m \times n}$, $\mathbf{X} \in \mathbb{R}^{n \times p}$, $\mathbf{W}_R \in \mathbb{R}^{p \times q}$, and $\mathbf{Y} \in \mathbb{R}^{m \times q}$.  
We proceed in two steps:
\begin{itemize}
    \item Let $\mathbf{Z} = \mathbf{W}_L \mathbf{X}$, then $vec(\mathbf{Z})=\mathbf{z} = (\mathbf{I}_p \otimes \mathbf{W}_L) \mathbf{x}$.
    \item Then, $\mathbf{Y} = \mathbf{Z} \mathbf{W}_R$, so $vec(\mathbf{Y})=\mathbf{y} = (\mathbf{W}_R^\top \otimes \mathbf{I}_m) \mathbf{z}$.
\end{itemize}

Substituting the expression for $\mathbf{z}$, we get:
$$
\mathbf{y} = (\mathbf{W}_R^T \otimes \mathbf{I}_m)(\mathbf{I}_p \otimes \mathbf{W}_L) \mathbf{x}.
$$

Using the mixed-product property of the Kronecker product:
$$
(\mathbf{A} \otimes \mathbf{B})(\mathbf{C} \otimes \mathbf{D}) = (\mathbf{A}\mathbf{C}) \otimes (\mathbf{B}\mathbf{D}),
$$
we simplify:
$$
(\mathbf{W}_R^\top \otimes \mathbf{I}_m)(\mathbf{I}_p \otimes \mathbf{W}_L) = \mathbf{W}_R^\top \otimes \mathbf{W}_L.
$$

Thus, the final expression becomes:
$$
\mathbf{y} = (\mathbf{W}_R^\top \otimes \mathbf{W}_L) \mathbf{x}.
$$
\end{proof}

\subsubsection{Formulation for Transformers}

\begin{figure}[htbp!]
\centering
\includegraphics[width=0.48\textwidth]{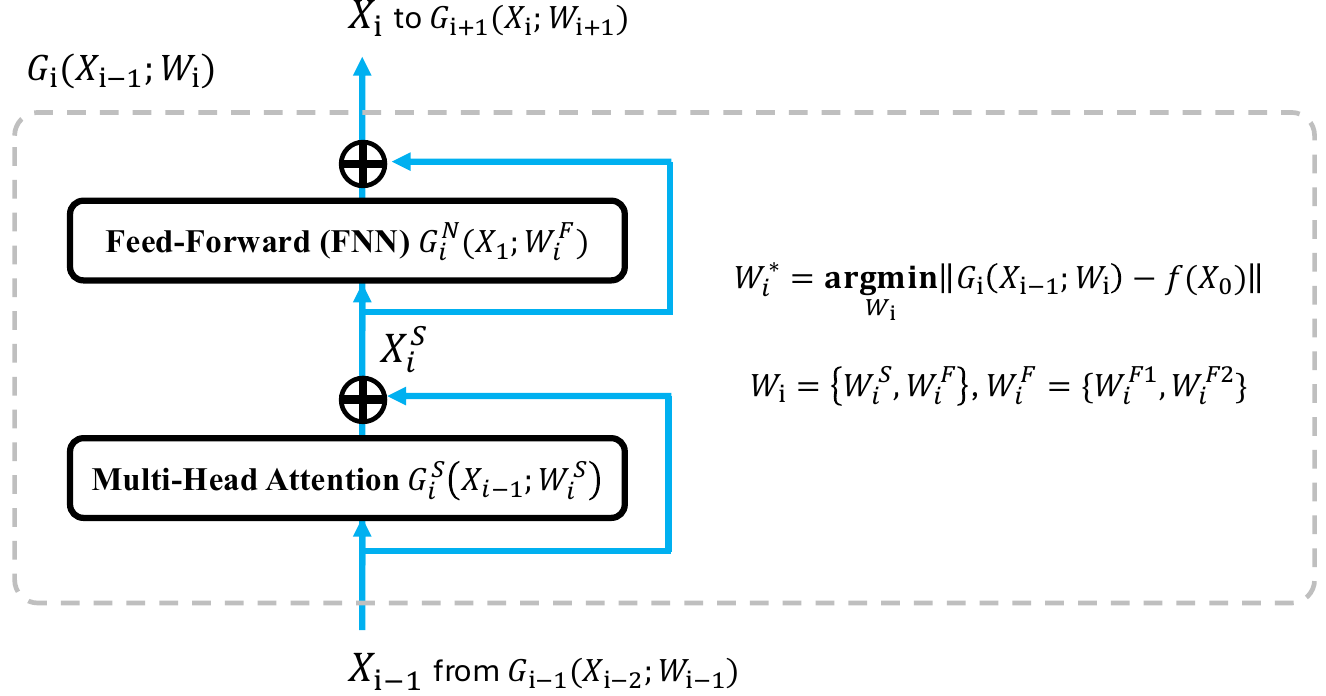}
\caption{The $i^{th}$ layer of a Transformer illustrated using the notations defined in this paper.}
\label{fig:trans_layer}
\end{figure}

Figure~\ref{fig:trans_layer} depicts the $i^{th}$ Transformer layer, consisting of a Multi-Head Attention (MHA) block followed by a Feed-Forward (FFN) block. We now cast this process into matrix–vector form.

\textbf{Multi-Head Attention Formulation} The input $\mathbf{X}_{i-1}$ is a matrix with each column representing an input token embedding, which will be transformed in to query, key, value matrices, respectively as at the $j^{th}$ self-attention head as
\begin{gather*}
   \mathbf{Q}_{i,j}=\mathbf{W}^Q_{i,j}\mathbf{X}_{i-1},\\
   \mathbf{K}_{i,j}=\mathbf{W}^K_{i,j}\mathbf{X}_{i-1},\\
   \mathbf{V}_{i,j}=\mathbf{W}^V_{i,j}\mathbf{X}_{i-1}, 
\end{gather*}
where $\mathbf{X}_{i-1} \in \mathbb{R}^{d \times I} , \mathbf{W}^Q_{i,j}, \mathbf{W}^K_{i,j}, \mathbf{W}^V_{i,j} \in \mathbb{R}^{dh \times d}$, and $h$ is the total number of heads.
It is then processed as
\begin{equation}
G_{i,j}^S(\mathbf{X}_{i-1})=\mathbf{V}_{i,j}softmax\left(\frac{\mathbf{Q}_{i,j}^T\mathbf{K}_{i,j}}{\sqrt{d}}\right)
\end{equation}
Letting $\mathbf{H}_{i,j}=softmax\left(\frac{\mathbf{Q}_{i,j}^T\mathbf{K}_{i,j}}{\sqrt{d}}\right)$, we have \begin{equation}
    G_{i,j}^S(\mathbf{X}_{i-1})=\mathbf{W}_{i,j}^V\mathbf{X}_i\mathbf{H}_{i,j}.
    \label{eq:attention}
\end{equation} 

Using \textbf{Theorem~\ref{bi_conv}}, We can the $j$-th self-attention in $i$-th layer can be written as:
\begin{equation}
vec(G_{i,j}^S(\mathbf{X}_{i-1}))=g_{i,j}^S(\mathbf{x}_{i-1})=\left(\mathbf{H}_{i,j}^\top \otimes \mathbf{W}_{i,j}^V \right) \mathbf{x}_{i-1}
\end{equation} 
The whole MHA computation with all $h$ heads merged is then written
\begin{equation}
\begin{aligned}
    G_{i}^S(\mathbf{X}_{i-1})&=\mathbf{W}_i^O\text{Concat}[G_{i,1}^S(\mathbf{X}_{i-1})^\top, \cdots G_{i,h}^S(\mathbf{X}_{i-1})^\top]^\top\\
    &+\mathbf{X}_{i-1}.
\end{aligned}
\end{equation}
where $\mathbf{W}_{i}^O \in \mathbb{R}^{d \times d}$ is the weight matrix for head merger.
This a right matrix multiplication, by applying \textbf{Theorem~\ref{th:right_conv}}, it can be exprssed in vectorized form as
\begin{equation}
g_{i}^S(\mathbf{x}_{i-1}) = (\mathbf{I}_I \otimes \mathbf{W}_{i,O})
\begin{bmatrix}
\left(\mathbf{H}_{i,1}^\top \otimes \mathbf{W}_{i,1}^V\right) \mathbf{x}_{i-1} \\
\left(\mathbf{H}_{i,2}^\top \otimes \mathbf{W}_{i,2}^V\right) \mathbf{x}_{i-1} \\
\vdots \\
\left(\mathbf{H}_{i,h}^\top \otimes \mathbf{W}_{i,h}^V\right) \mathbf{x}_{i-1}
\end{bmatrix}
+ \mathbf{x}_{i-1}.
\label{eq:vect_MHA}
\end{equation}
Note that we have replaced $G^S()$ with $g^S()$ after vectorization, as this adheres more strictly to the mathematical convention. 
Specifically, $G^S()$ operates on the matrix-form input $\mathbf{X}_{i-1}$, while $g^S()$ operates on the vector-form input $\mathbf{x}_{i-1}$.

Eq.~(\ref{eq:vect_MHA}) can be simplified as:
\begin{equation}
\begin{aligned}
&vec(G_{i}^S(\mathbf{X}_{i-1}))=g_{i}^S(\mathbf{x}_{i-1})= 
& (\mathbf{I}_I \otimes \mathbf{W}_{i,O})
\begin{bmatrix}
\left(\mathbf{H}_{i,1}^\top \otimes \mathbf{W}_{i,1}^V\right)  \\
\left(\mathbf{H}_{i,2}^\top \otimes \mathbf{W}_{i,2}^V\right)  \\
\vdots \\
\left(\mathbf{H}_{i,h}^\top \otimes \mathbf{W}_{i,h}^V\right) 
\end{bmatrix}\mathbf{x}_{i-1}
+ \mathbf{x}_{i-1}.
\end{aligned}
\end{equation}

Let
$$
\mathbf{W}_{i}^S = (\mathbf{I}_I \otimes \mathbf{W}_{i,O})
\begin{bmatrix}
\left(\mathbf{H}_{i,1}^\top \otimes \mathbf{W}_{i,1}^V\right)  \\
\left(\mathbf{H}_{i,2}^\top \otimes \mathbf{W}_{i,2}^V\right)  \\
\vdots \\
\left(\mathbf{H}_{i,h}^\top \otimes \mathbf{W}_{i,h}^V\right) 
\end{bmatrix},
$$
the whole process of MHA in the $i$-th layer can be written into vector form as
\begin{equation}
 g_{i}^S(\mathbf{x}_{i-1})=\mathbf{W}_{i}^S\mathbf{x}_{i-1}
+ \mathbf{x}_{i-1} .   
\end{equation}

\textbf{Feed-Forward Formulation} 
\label{sec:ffn}
The FNN component is essentially a FCN which takes input $\mathbf{X}_i^S=G^S(\mathbf{X}_{i-1})$ (or in vector form $\mathbf{x}_i^S=g^S(\mathbf{x}_{i-1})$) from the MHA component, it can be written as
\begin{equation} \mathbf{X}_i=G_i^F(\mathbf{X}_i^S;\mathbf{W}_i^F)=\mathbf{W}_i^{F2}\sigma(\mathbf{W}_i^{F1}\mathbf{X}_i^S+\mathbf{b}_i^{F1}) +\mathbf{b}_i^{F2}+\mathbf{X}_i^S,
\label{eq:FNN_matrix}
\end{equation}
where $\mathbf{W}_i^F=\{\mathbf{W}_i^{F2},\mathbf{W}_i^{F1}\}$ is the paramter implementation, and $\mathbf{b}_i^{F1}$ and $\mathbf{b}_i^{F2}$ is the bais for the FNN block.

With \textbf{Theorem~\ref{th:right_conv}}, we can transform each part as 
\begin{gather*}
    vec(\mathbf{W}_i^{F1}\mathbf{X}_i^S)=(\mathbf{I}_I \otimes \mathbf{W}_i^{F1}) \mathbf{x}_i^S=\mathbf{W}_i^{F1'} \mathbf{x}_i^S,\\
    \mathbf{b}_i^{F1'}=\mathbf{b}_i^{F1} \otimes \mathbf{1}_I,\\
    \mathbf{b}_i^{F2'}=\mathbf{b}_i^{F2} \otimes \mathbf{1}_I,\\
    vec(\mathbf{W}_i^{F2}\sigma(\cdot))=(\mathbf{I}_I \otimes \mathbf{W}_i^{F2})vec(\sigma(\cdot))=\mathbf{W}_i^{F2'} vec(\sigma(\cdot)).
\end{gather*}

Eq.~(\ref{eq:FNN_matrix}) is then converted into its vector form as
\begin{equation} 
\begin{aligned}
\mathbf{x}_i&=vec(G_i^F(\mathbf{X}_i^S;\mathbf{W}_i^F))=g_i^F(\mathbf{x}_i^S;\mathbf{W}_i^F)\\
&=\mathbf{W}_i^{F2'}\sigma(\mathbf{W}_i^{F1'} \mathbf{x}_i^S+\mathbf{b}_i^{F1'}) +\mathbf{b}_i^{F2'} +\mathbf{x}_i^S.
\end{aligned}
\label{eq:ffn_uat}
\end{equation}

Now we see that the entire procedure matches the form of Eq.~(\ref{eq:res_general_term}), with the only difference that $\mathbf{x}_i^S$ serves as the input in place of $\mathbf{x}_{i-1}$.
In fact, $\mathbf{x}_i^S$ can be viewed as a transformed version of  $\mathbf{x}_{i-1}$. 
This is justified because, although the MHA block is central to a Transformer layer’s performance, computationally it acts as a transformation of the original input $\mathbf{x}_{i-1}$ via the weight matrix $\mathbf{W}_{i}^S$.
Put differently, Transformers form a special class of residual networks: each layer consumes a transformed version of the previous layer’s output, rather than the raw output itself as in standard residual networks.
Consequently, the layer-wise UAT and LPA continue to apply to Transformers.
To ensure greater rigor, we have also conducted a thorough proof of how this difference affects the UAT property in Section \ref{sec:tf_uat}.

\section{A detailed Examination of the Universal Approximation Property in Transformers}
\label{sec:tf_uat}
In this section, we analyze the universal approximation property of Transformers from a mathematical standpoint.

Based on the layer-wise UAT framework in Section~\ref{sec:CFT}, it suffices to show that a single-layer Transformer has universal approximation capability; the result then extends to deeper models.
More specifically, UAT requires three conditions:
\begin{itemize}
    \item \textbf{Compact Input}: the input domain is compact;
    \item \textbf{FFN-centered Network}: the model has the form of UAT with an FFN as its core;
    \item \textbf{Target Validity}: the target function is well-defined on a compact set.
\end{itemize}

We begin by examining the structure of a single Transformer layer and, accordingly, delineating the proof objective.


\subsection{Transformer Sturcture and Proof Objective}
A single-layer Transformer consists primarily of two components: MHA and a FFN, with the overall operation expressed as $(g_i^F \circ g_i^S)(\mathbf{x}_{i-1})$, where: $g_i^S: \mathbb{R}^{dI} \to \mathbb{R}^{dI}$ denotes the feature transformation applied by the MHA module;
$g_i^F: \mathbb{R}^{dI} \to \mathbb{R}^{dI}$ represents the non-linear mapping of the FFN; $\mathbf{x}_{i-1} \in K_{i-1}^I \subset \mathbb{R}^{dI}$ is the input at layer $i$, with $K_{i-1}^I = (K_{i-1})^I$ being the product space of $I$ tokens, each belonging to a compact set $K_{i-1} \subset \mathbb{R}^d$. 
Since finite products of compact sets are compact, $K_{i-1}^I$ is compact in $\mathbb{R}^{dI}$.

Notably, the mathematical form of the FFN component, $g_i^F$ (see Eq.~(\ref{eq:ffn_uat})), aligns with the framework of the UAT (see Eq.~(\ref{eq:res_general_term})), fulfilling the \textbf{FFN-centrered Network} condition.

Let the target function $f_i: K_{i-1}^I \to \mathbb{R}^{dI}$ be continuous. 
The key difficulty in establishing its validity is that, before the FFN $g_i^F$ receives its input $\mathbf{x}_{i}^S$ the MHA block $g_i^S$ applies an additional transformation, and thus the FFN does not act directly on $\mathbf{x}_{i-1}$.

Hence, $f_i$ is valid if there exists a continuous function $f_i^F$ defined on a compact domain such that
$$
f_i(\mathbf{x}_{i-1}) = (f_i^F \circ g_i^S)(\mathbf{x}_{i-1}),
$$
and $f_i^F$ can be approximated by $g_i^F$. 
Equivalently, $f_i^F$ serves as the transformed target function for the FFN module.
We can now state our proof goals:
\begin{enumerate}
    \item \textbf{Compact Input}: $K_i^S := g_i^S(K_{i-1}^I)$ is a compact subset of $\mathbb{R}^{dI}$;
    \item \textbf{Target Validty}: There exists a continuous function $f_i^F: K_i^S \to \mathbb{R}^{dI}$ such that $f_i = f_i^F \circ g_i^S$.
\end{enumerate}

\subsection{$K_i^S$ is Compact}
\begin{proof}
Since $K_{i-1}^I$ is a finite product of compact sets, it is compact in $\mathbb{R}^{dI}$. The transformation $g_i^S$ is composed of matrix multiplication, Kronecker product, Softmax, and vector addition. The Softmax function is continuous on $\mathbb{R}^{I \times I}$, and all linear operations are continuous; compositions and combinations of continuous functions remain continuous. Hence, $g_i^S$ is a continuous function of the input $\mathbf{x}_{i-1}$. As the continuous image of a compact set is compact, the set
$$
K_i^S := g_i^S(K_{i-1}^I) = \left\{ g_i^S(\mathbf{x}_{i-1}) \mid \mathbf{x}_{i-1} \in K_{i-1}^I \right\}
$$
is compact in $\mathbb{R}^{dI}$.
\end{proof}

\subsection{$f_i^F$ is Valid}
\subsubsection{$f_i$ is decomposable} 
The first requirement for $f_i^F$ to be well-defined is that $f_i$ be decomposable.
By the Factorization Continuity Theorem (Theorem~\ref{th:FCT}), this holds provided the following condition is satisfied:
\begin{equation}
    g_i^S(\mathbf{x}_i^{(1)}) = g_i^S(\mathbf{x}_i^{(2)}) \implies f_i(\mathbf{x}_i^{(1)}) = f_i(\mathbf{x}_i^{(2)}), \quad \forall \mathbf{x}_i^{(1)}, \mathbf{x}_i^{(2)} \in K_{i-1}^I.
    \label{eq:f_dis_T}
\end{equation}

We regard this condition as satisfied because it is a learnable property reinforced by optimization.
The rationale is as follows.
If $g_i^S$ maps two inputs $\mathbf{x}_i^{(1)}, \mathbf{x}_i^{(2)}$ with $f_i(\mathbf{x}_i^{(1)}) \ne f_i(\mathbf{x}_i^{(2)})$ to the same representation $g_i^S(\mathbf{x}_i^{(1)}) = g_i^S(\mathbf{x}_i^{(2)})$, then the FFN (being a deterministic function) cannot simultaneously output both $f_i(\mathbf{x}_i^{(1)})$ and $f_i(\mathbf{x}_i^{(2)})$, resulting in irreducible approximation error and high loss. Consequently, gradient descent penalizes such ``representation collapse", encouraging $g_i^S$ to preserve distinctions between inputs with different target values. In other words, the attention mechanism, combined with training, promotes the emergence of consistency representations for $f_i$, ensuring that inputs differing under $g_i^S$ are not collapsed into the same latent point.

\subsubsection{Existence, continuity, and approximability of $f_i^F$}
\begin{proof}
Under condition in Eq.~(\ref{eq:f_dis_T}), we define $f_i^F: K_i^S \to \mathbb{R}^{dI}$ by
$$
f_i^F(\mathbf{z}) := f_i(\mathbf{x}), \quad \text{for any } \mathbf{x} \in K_{i-1}^I \text{ such that } g_i^S(\mathbf{x}) = \mathbf{z}.
$$
Condition in Eq.~(\ref{eq:f_dis_T}) ensures that $f_i^F$ is well-defined. By the Factorization Continuity Theorem~\ref{th:FCT}, $f_i^F$ is continuous on the compact set $K_i^S$.  

Then, by the UAT, there exists a FFN $g_i^F$ that uniformly approximates $f_i^F$ to arbitrary precision.

\end{proof}

In summary, the MHA module $g_i^S$ continuously maps the input space $K_{i-1}^I$ to a latent representation space $K_i^S$, while the training dynamics encourage $g_i^S$ to maintain consistency representation, ensuring that $f_i$ can be factored as $f_i^F \circ g_i^S$. The FFN $g_i^F$ then approximates $f_i^F$, completing the approximation. Therefore, a single-layer Transformer is a universal approximator.

\section{Representation Consistency and Optimization Dynamics in Deep Residual Networks}
\label{apd:sec:optimization_dynamics}

A multi-layer RN can be viewed as an approximation to a target function $f: \mathbb{R}^n \to \mathbb{R}^m$ with a series of single-layer RNs $\hat{f}_N, \hat{f}_{N-1}, \cdots , \hat{f}_1$:
$$
\hat{f} = \hat{f}_N \circ \hat{f}_{N-1} \circ \cdots \circ \hat{f}_1,
$$
During the forward pass, an intermediate module $\hat{f}_i$ computes a representation of the input and hands it off to subsequent layers $\hat{f}_{i+1}, ..., \hat{f}_{N}$ for further processing. These intermediate representations are crucial for accurately approximating the target function $f$. A key requirement is that $\hat{f}_i$ produce sufficiently distinguishable representations.


\subsection{Distinguishability Preservation in Intermediate Representations}
In RNs, the effectiveness of intermediate representations depends not only on hierarchical abstraction, but more critically on their ability to provide embeddings consitent to the outputs of the target function $f$. Consider an arbitrary intermediate layer $i$ ($1 \leq i \leq N$), and define the encoding path up to layer $i-1$ as:
$$
\phi_i = \hat{f}_{i-1} \circ \cdots \circ \hat{f}_1,
$$
which maps the input $\mathbf{x} \in \mathbb{R}^n$ to its representation at the output of the $i-1$-th layer. The decoding path from layer $i$ to the output is defined as:
$$
g_i = \hat{f}_N \circ \cdots \circ \hat{f}_i.
$$
Thus, the network computes $\hat{f} = g_i \circ \phi_i$, where $\hat{f}$ is an approximation to the target function $f$.

Assume there are inputs $\mathbf{x}_0^{(1)}, \mathbf{x}_0^{(2)}$ with $f(\mathbf{x}_0^{(1)}) \ne f(\mathbf{x}_0^{(2)})$, yet $\phi_i(\mathbf{x}_0^{(1)}) = \phi_i(\mathbf{x}_0^{(2)})$. 
Then, due to the deterministic nature of $g_i$, the network will produce identical final estimates, $\hat{f}(\mathbf{x}_0^{(1)}) = \hat{f}(\mathbf{x}_0^{(2)})$, contradicting the ground truth $f(\mathbf{x}_0^{(1)}) \ne f(\mathbf{x}_0^{(2)})$.
This is an indication that the optimization of the network has not converged to its optimal.

Therefore, by Factorization Continuity Theorem
\ref{th:FCT}, upon convergence (idealy $\hat{f} = f$), $\phi_i$ must satisfy:
\begin{equation}
\phi_i(\mathbf{x}_0^{(1)}) = \phi_i(\mathbf{x}_0^{(2)}) \implies f(\mathbf{x}_0^{(1)}) = f(\mathbf{x}_0^{(2)}), \quad \forall\, \mathbf{x}_0^{(1)}, \mathbf{x}_0^{(2)} \in \mathbb{R}^n.
\label{apd:eq:rep_consistency}
\end{equation}
This condition is a necessary condition for $f$ can be represented by $\hat{f}$: if violated, $f$ lies outside the expressive capacity of the $f$.

In practice, while $\hat{f}$ is typically only required to approximate $f$ rather than match it exactly, significant issues still arise when $\phi_i(\mathbf{x}_0^{(1)}) \approx \phi_i(\mathbf{x}_0^{(2)})$ but $f(\mathbf{x}_0^{(1)})$ and $f(\mathbf{x}_0^{(2)})$ differ substantially. In such cases, the decoder $g_i$ must generate vastly different outputs from nearly identical inputs, demanding extreme sensitivity to small input perturbations. This not only increases the difficulty of approximation but also risks training instability. Hence, even in approximate learning, intermediate representations should avoid mapping inputs with significantly different targets into overly similar regions of the representation space.

Equivalently, this principle can be expressed more directly by taking the conv erse-negative proposition of Eq.~(\ref{apd:eq:rep_consistency}) as:
$$
f(\mathbf{x}_0^{(1)}) \ne f(\mathbf{x}_0^{(2)}) \implies \phi_i(\mathbf{x}_0^{(1)}) \ne \phi_i(\mathbf{x}_0^{(2)}), \quad \forall\, \mathbf{x}_0^{(1)}, \mathbf{x}_0^{(2)} \in \mathbb{R}^n.
$$
We refer to this necessary condition as $\hat{f}$-distinguishability, it means that the intermediate representation $\phi_i$ must preserve the distinguishability of inputs with distinct true target values under $f$. Notably, the condition is asymmetric: when $f(\mathbf{x}_0^{(1)}) = f(\mathbf{x}_0^{(2)})$, multiple inputs may share similar or even identical representations. Such over-separation is harmless and often beneficial, as it allows the network to encode auxiliary information that may aid generalization.

In summary, maintaining distinguishability between intermediate representations with distinct target is a key design principle for efficient approximation. The most critical failure mode is false merging, in which inputs with different targets are irreversibly collapsed into the same or highly similar representations. Once such structural loss occurs, no subsequent layer can recover the necessary distinctions, rendering the error uncorrectable.

\subsection{Unified Interpretation via Optimization Dynamics: Emergence of $f$-Distinguishability}
\label{sec:uni_dis}

Under both E2E and LPA optimization, representation collapse induces irreducible error. Since gradient descent inherently minimizes the loss, it implicitly penalizes parameter configurations that lead to information loss. This mechanism drives the network to gradually establish $f$-consistency in intermediate representation during training—inputs with different $f$-values become increasingly separated in the intermediate representation space.

Notably, the residual connection plays a crucial role: by preserving the identity path, it maintains access to the original input information, reducing the risk of information loss and providing structural support for the emergence of $f$-consistency.

\subsection{Generality: Optimization-Induced Separation}
\label{sec:general-dis}
This mechanism is not limited to residual networks or specific optimization strategies. In any deep model where subsequent modules depend deterministically on earlier representations (i.e., $\mathbf{x}_j = T_j(\mathbf{x}_{j-1})$), representational collapse leads to unrecoverable errors. Thus, both end-to-end joint training and layer-wise optimization use loss feedback to drive the network away from information merging.

We term this universal phenomenon optimization-induced consistency: the training process itself acts as an implicit regularizer, encouraging the network to preserve task-relevant information structures during representation learning. This provides a unified perspective for understanding generalization, representation evolution, and universal approximation in deep networks.

\section{Factorization Continuity Theorem}
\label{sec:Factorization Continuity Theorem}
\begin{theorem}[\textbf{Factorization and Continuity}]
\label{th:FCT}
Let $\mathbf{I}_n \subset \mathbb{R}^n$ be compact, $f: \mathbf{I}_n \to \mathbb{R}^m$ continuous, and $T: \mathbf{I}_n \to Z$ continuous with $Z \subset \mathbb{R}^d$ compact. 
Then:
\begin{itemize}
    \item \textbf{(Existence of $g$)} There exists a function $g: T(\mathbf{I}_n) \to \mathbb{R}^m$ such that $f = g \circ T$ if and only if
   $$
   T(\mathbf{x}^{(1)}) = T(\mathbf{x}^{(2)}) \implies f(\mathbf{x}^{(1)}) = f(\mathbf{x}^{(2)}), \quad \forall \mathbf{x}^{(1)}, \mathbf{x}^{(2)} \in \mathbf{I}_n.
   $$
   \item \textbf{(Continuity of $g$)} If such a $g$ exists, then $g$ is continuous on the compact set $T(\mathbf{I}_n)$.
\end{itemize}
\end{theorem}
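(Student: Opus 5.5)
The plan is to treat the two bullets separately. Existence of $g$ is a purely set-theoretic well-definedness statement. Continuity will come from compactness of $\mathbf{I}_n$ through a subsequence argument. Note that continuity of $f$ and $T$ and compactness of $\mathbf{I}_n$ are the only hypotheses actually needed; compactness of $Z$ is not used.

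\textbf{Existence.} For necessity, suppose $f = g\circ T$ and $T(\mathbf{x}^{(1)}) = T(\mathbf{x}^{(2)})$. Then $f(\mathbf{x}^{(1)}) = g(T(\mathbf{x}^{(1)})) = g(T(\mathbf{x}^{(2)})) = f(\mathbf{x}^{(2)})$, which is the stated condition. For sufficiency, I will define $g$ on $T(\mathbf{I}_n)$ as follows. For each $\mathbf{z}\in T(\mathbf{I}_n)$, pick any preimage $\mathbf{x}$ with $T(\mathbf{x})=\mathbf{z}$ and set $g(\mathbf{z}) := f(\mathbf{x})$. The hypothesis says exactly that this value does not depend on which preimage is chosen, so $g$ is well defined, and $f = g\circ T$ holds by construction. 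Moreover, $g$ is uniquely determined on $T(\mathbf{I}_n)$ because $T$ maps onto that set; this matters for the second bullet, since ``such a $g$'' then refers to a single function.

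\textbf{Continuity.} I will argue by contradiction using sequences, which is legitimate because $T(\mathbf{I}_n)\subset\mathbb{R}^d$ is a metric space. Suppose $g$ is discontinuous at some $\mathbf{z}\in T(\mathbf{I}_n)$. Then there exist $\epsilon>0$ and a sequence $\mathbf{z}_k\to\mathbf{z}$ in $T(\mathbf{I}_n)$ with $\|g(\mathbf{z}_k)-g(\mathbf{z})\|\ge\epsilon$ for all $k$.
\begin{enumerate}
\item Choose preimages $\mathbf{x}_k\in\mathbf{I}_n$ with $T(\mathbf{x}_k)=\mathbf{z}_k$.
\item By compactness of $\mathbf{I}_n$ (Bolzano--Weierstrass), pass to a subsequence with $\mathbf{x}_{k_j}\to\mathbf{x}\in\mathbf{I}_n$.
\item Continuity of $T$ gives $T(\mathbf{x}) = \lim_j \mathbf{z}_{k_j} = \mathbf{z}$, so $g(\mathbf{z}) = f(\mathbf{x})$.
\item Continuity of $f$ gives $g(\mathbf{z}_{k_j}) = f(\mathbf{x}_{k_j}) \to f(\mathbf{x}) = g(\mathbf{z})$.
\end{enumerate}
The last step contradicts $\|g(\mathbf{z}_{k_j})-g(\mathbf{z})\|\ge\epsilon$, so $g$ is continuous. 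Compactness of $T(\mathbf{I}_n)$ itself follows because it is the continuous image of a compact set.

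An alternative I would mention is the topological route. A continuous map from a compact space to a Hausdorff space is closed, so $T:\mathbf{I}_n\to T(\mathbf{I}_n)$ is a quotient map, and $g$ is continuous because $g\circ T=f$ is. The sequential argument above is more elementary, so I will use it.

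The main obstacle is step 2 of the continuity argument. The preimages $\mathbf{x}_k$ need not converge, and they may converge to a different preimage of $\mathbf{z}$ than the one used to define $g(\mathbf{z})$. Compactness of $\mathbf{I}_n$ supplies a convergent subsequence, and the well-definedness established in the first part handles the choice of preimage. Without compactness of the domain the continuity claim fails, so this is exactly where that hypothesis is used.
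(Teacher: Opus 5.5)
Your proposal is correct and takes essentially the same route as the paper. Both define $g$ on $T(\mathbf{I}_n)$ via preimages, with necessity, sufficiency and uniqueness, and both prove continuity by contradiction from a sequence $\mathbf{z}_k \to \mathbf{z}$, choosing preimages $\mathbf{x}_k$, extracting a convergent subsequence by compactness of $\mathbf{I}_n$, and then using continuity of $T$ and $f$. Your side remarks are also accurate: compactness of $Z$ is never needed, and the closed-map/quotient-map argument is a valid alternative.
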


If the above assumption holds, then since $g$ is continuous on the compact set $T(\mathbf{I}_n) \subseteq Z$, it follows that $g$ can be approximated by a neural network. We now provide the proof.

\begin{proof}
Let $\mathbf{I}_n \subseteq \mathbb{R}^n$ be a nonempty compact set, $f: \mathbf{I}_n \to \mathbb{R}^m$ a continuous function, and $T: \mathbf{I}_n \to \mathbb{R}^d$ a continuous mapping. Define the image set:
$$
Z := T(\mathbf{I}_n) = \{ T(\mathbf{x}) \mid \mathbf{x} \in \mathbf{I}_n \} \subseteq \mathbb{R}^d.
$$
Since $\mathbf{I}_n$ is compact and $T$ is continuous, $Z$ is also compact (the continuous image of a compact set is compact).

Our purpose is to prove that there exists a function $g: Z \to \mathbb{R}^m$ such that:
$$
f(\mathbf{x}) = g(T(\mathbf{x})) = (g \circ T)(\mathbf{x}), \quad \forall \mathbf{x} \in \mathbf{I}_n,
$$
i.e., $f = g \circ T$.
\end{proof}

\begin{lemma}[\textbf{Existence}]
Let $T: \mathbf{I}_n \to \mathbb{R}^d$ be an arbitrary mapping and $f: \mathbf{I}_n \to \mathbb{R}^m$. If 
$$
\forall \mathbf{x}^{(1)}, \mathbf{x}^{(2)} \in \mathbf{I}_n, \quad T(\mathbf{x}^{(1)}) = T(\mathbf{x}^{(2)}) \implies f(\mathbf{x}^{(1)}) = f(\mathbf{x}^{(2)}),
$$
then there exists a function $g: T(\mathbf{I}_n) \to \mathbb{R}^m$ such that $f = g \circ T$.
\end{lemma}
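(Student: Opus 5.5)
The plan is to construct $g$ directly as a set-theoretic function on the image $T(\mathbf{I}_n)$, using the stated implication to show that it is well defined. For each $\mathbf{z} \in T(\mathbf{I}_n)$ the fiber $T^{-1}(\mathbf{z}) := \{\mathbf{x} \in \mathbf{I}_n : T(\mathbf{x}) = \mathbf{z}\}$ is nonempty, because $\mathbf{z}$ lies in the image. I would define
\[
g(\mathbf{z}) := f(\mathbf{x}) \quad \text{for any } \mathbf{x} \in T^{-1}(\mathbf{z}).
\]
No continuity, compactness or approximation result is needed. The lemma is stated for an arbitrary mapping $T$, so the argument is purely about functions and fibers.

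The key steps are as follows.

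\textbf{Nonemptiness of fibers.} This is immediate from the definition of the image: if $\mathbf{z} = T(\mathbf{x})$, then $\mathbf{x} \in T^{-1}(\mathbf{z})$. To avoid any appeal to choice, I would define $g$ as the relation
\[
\{(T(\mathbf{x}), f(\mathbf{x})) : \mathbf{x} \in \mathbf{I}_n\} \subseteq T(\mathbf{I}_n) \times \mathbb{R}^m
\]
and verify that it is the graph of a function.

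\textbf{Well-definedness.} This is the main point to check. Suppose $\mathbf{x}^{(1)}, \mathbf{x}^{(2)}$ both lie in $T^{-1}(\mathbf{z})$. Then $T(\mathbf{x}^{(1)}) = T(\mathbf{x}^{(2)})$, and the hypothesis gives $f(\mathbf{x}^{(1)}) = f(\mathbf{x}^{(2)})$. Hence the relation above assigns exactly one value to each $\mathbf{z}$, so it is a function.

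\textbf{Factorization.} For any $\mathbf{x} \in \mathbf{I}_n$, the point $\mathbf{x}$ itself is an admissible representative of the fiber over $T(\mathbf{x})$. Therefore $g(T(\mathbf{x})) = f(\mathbf{x})$, which is exactly $f = g \circ T$.

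I expect no genuine obstacle in this lemma; the well-definedness step is the only one with content. It is worth observing in passing that the converse direction of Theorem~\ref{th:FCT} is equally trivial: if $f = g \circ T$ and $T(\mathbf{x}^{(1)}) = T(\mathbf{x}^{(2)})$, then applying $g$ gives $f(\mathbf{x}^{(1)}) = f(\mathbf{x}^{(2)})$.

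The substantive difficulty in Theorem~\ref{th:FCT} lies instead in the continuity claim, which this lemma does not address. There I would use compactness through a sequential argument. Suppose $\mathbf{z}_k \to \mathbf{z}$ in $T(\mathbf{I}_n)$ but $g(\mathbf{z}_k)$ stays at distance at least $\epsilon$ from $g(\mathbf{z})$. Choose preimages $\mathbf{x}_k$ of $\mathbf{z}_k$ and extract a subsequence converging to some $\mathbf{x} \in \mathbf{I}_n$ by compactness. By continuity of $T$, $T(\mathbf{x}) = \mathbf{z}$. By continuity of $f$, $f(\mathbf{x}_k) \to f(\mathbf{x})$, and $f(\mathbf{x}) = g(\mathbf{z})$. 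This contradicts the assumed gap of at least $\epsilon$, so $g$ is continuous.
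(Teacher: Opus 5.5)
Your proposal is correct and follows essentially the same route as the paper: define $g(\mathbf{z}) := f(\mathbf{x})$ for any preimage $\mathbf{x}$ of $\mathbf{z}$, use the hypothesis to show the value does not depend on which preimage is chosen, and read off $g(T(\mathbf{x})) = f(\mathbf{x})$. Your graph-of-a-relation formulation, which avoids any appeal to choice, is a harmless refinement; your side remarks on necessity and on the sequential-compactness continuity argument match the paper's accompanying lemmas.
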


\begin{proof}

For each $\mathbf{z} \in Z$, define $g(\mathbf{z}) := f(\mathbf{x})$ for any $\mathbf{x} \in \mathbf{I}_n$ with $T(\mathbf{x}) = \mathbf{z}$. Such an $\mathbf{x}$ exists because $Z = T(\mathbf{I}_n)$. This definition is well-defined: if $T(\mathbf{x}^{(1)}) = T(\mathbf{x}^{(2)}) = \mathbf{z}$, then by assumption $f(\mathbf{x}^{(1)}) = f(\mathbf{x}^{(2)})$. So $g(\mathbf{z})$ is independent of the choice of preimage.

By construction, $g(T(\mathbf{x})) = f(\mathbf{x})$ for all $\mathbf{x} \in \mathbf{I}_n$, so there exists a function $g$ fulfilling $f = g \circ T$.

\textbf{Necessity:} Suppose $f = g \circ T$, i.e., $f(\mathbf{x}) = g(T(\mathbf{x}))$. If $T(\mathbf{x}^{(1)}) = T(\mathbf{x}^{(2)})$, then:
$$
f(\mathbf{x}^{(1)}) = g(T(\mathbf{x}^{(1)})) = g(T(\mathbf{x}^{(2)})) = f(\mathbf{x}^{(2)}).
$$
This proves the necessity.

\textbf{Sufficiency:} Suppose that 
$$
T(\mathbf{x}^{(1)}) = T(\mathbf{x}^{(2)}) \implies f(\mathbf{x}^{(1)}) = f(\mathbf{x}^{(2)}), \quad \forall \mathbf{x}^{(1)}, \mathbf{x}^{(2)} \in \mathbf{I}_n.
$$
We aim to construct a function $g: Z := T(\mathbf{I}_n) \to \mathbb{R}^m$ such that $f = g \circ T$. Define $g$ as follows: for each $\mathbf{z} \in Z$, choose any $\mathbf{x} \in \mathbf{I}_n$ such that $T(\mathbf{x}) = \mathbf{z}$ (such $\mathbf{x}$ exists by definition of $Z$), and set
$$
g(\mathbf{z}) := f(\mathbf{x}).
$$

We now show that $g$ is well-defined, i.e., the value $g(\mathbf{z})$ does not depend on the choice of preimage $\mathbf{x}$. Let $\mathbf{x}^{(1)}, \mathbf{x}^{(2)} \in \mathbf{I}_n$ be such that $T(\mathbf{x}^{(1)}) = T(\mathbf{x}^{(2)}) = \mathbf{z}$. Then by assumption, $f(\mathbf{x}^{(1)}) = f(\mathbf{x}^{(2)})$. Hence, regardless of which preimage we choose, $g(\mathbf{z}) = f(\mathbf{x})$ is the same. Therefore, $g$ is well-defined. Finally, by construction, for every $\mathbf{x} \in \mathbf{I}_n$, we have
$$
g(T(\mathbf{x})) = f(\mathbf{x}),
$$
since $g(T(\mathbf{x}))$ was defined precisely as $f(\mathbf{x})$ when $\mathbf{x}$ was chosen as the preimage of $T(\mathbf{x})$. Thus, $f = g \circ T$.
\end{proof}

\begin{lemma}[\textbf{Uniqueness}] 
If $g_1, g_2: Z := T(\mathbf{I}_n) \to \mathbb{R}^m$ both satisfy $f = g_1 \circ T = g_2 \circ T$, then $g_1 = g_2$ on $Z$.
\end{lemma}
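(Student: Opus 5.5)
The statement is the Uniqueness lemma: if $g_1\circ T = g_2\circ T = f$ on $\mathbf{I}_n$, then $g_1 = g_2$ on $Z = T(\mathbf{I}_n)$. The plan is a direct pointwise argument that uses only the surjectivity of $T$ onto its image. No continuity or compactness is needed for this part.

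First, fix an arbitrary $\mathbf{z} \in Z$. Because $Z$ is defined as the image $T(\mathbf{I}_n)$, there is some $\mathbf{x} \in \mathbf{I}_n$ with $T(\mathbf{x}) = \mathbf{z}$. This is the only point where the domain of $g_1, g_2$ matters: it has to be exactly the image, not a larger set such as the ambient compact $Z \subset \mathbb{R}^d$ appearing in Theorem~\ref{th:FCT}. On a larger set uniqueness fails, since $g_1$ and $g_2$ could differ off the image.

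Next, evaluate both factorizations at this $\mathbf{x}$:
\[
g_1(\mathbf{z}) = g_1(T(\mathbf{x})) = f(\mathbf{x}) = g_2(T(\mathbf{x})) = g_2(\mathbf{z}).
\]
Since $\mathbf{z}$ was arbitrary, $g_1 = g_2$ on $Z$.

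There is no real obstacle in this argument. The only care needed is bookkeeping: the lemma's $Z$ must be read as $T(\mathbf{I}_n)$, as the lemma itself specifies, so that every point of $Z$ has a preimage. The lemma also implies that the $g$ built in the Existence lemma is independent of the choice of preimages. That independence is what the continuity part of Theorem~\ref{th:FCT} will rely on.
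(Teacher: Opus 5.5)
Your proof is correct and is essentially the paper's argument: fix $\mathbf{z}\in Z$, take a preimage $\mathbf{x}$ with $T(\mathbf{x})=\mathbf{z}$ (which exists because $Z=T(\mathbf{I}_n)$), and obtain $g_1(\mathbf{z})=f(\mathbf{x})=g_2(\mathbf{z})$. Your side remark that the argument needs $Z$ to be exactly the image, not the larger ambient compact set in Theorem~\ref{th:FCT}, is accurate and worth keeping.
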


\begin{proof}
Assume $g_1$ and $g_2$ satisfy:
$$
f(\mathbf{x}) = g_1(T(\mathbf{x})), \quad f(\mathbf{x}) = g_2(T(\mathbf{x})), \quad \forall \mathbf{x} \in \mathbf{I}_n.
$$
We aim to show $g_1(\mathbf{z}) = g_2(\mathbf{z})$ for all $\mathbf{z} \in Z$. Let $\mathbf{z} \in Z$ be arbitrary. Since $Z = T(\mathbf{I}_n)$, there exists $\mathbf{x} \in \mathbf{I}_n$ such that $T(\mathbf{x}) = \mathbf{z}$. Then:
$$
g_1(\mathbf{z}) = g_1(T(\mathbf{x})) = f(\mathbf{x}), \\
g_2(\mathbf{z}) = g_2(T(\mathbf{x})) = f(\mathbf{x}).
$$
Hence $g_1(\mathbf{z}) = f(\mathbf{x}) = g_2(\mathbf{z})$. Since $\mathbf{z}$ was arbitrary, $g_1 = g_2$ on $Z$. Therefore, the function $g: Z \to \mathbb{R}^m$ satisfying $f = g \circ T$ is unique.
\end{proof}

\begin{lemma}[\textbf{Continuity}]   
Let $\mathbf{I}_n \subseteq \mathbb{R}^n$ be a nonempty compact set, $f: \mathbf{I}_n \to \mathbb{R}^m$ continuous, $T: \mathbf{I}_n \to \mathbb{R}^d$ continuous, and suppose:
$$
\forall \mathbf{x}^{(1)}, \mathbf{x}^{(2)} \in \mathbf{I}_n, \quad T(\mathbf{x}^{(1)}) = T(\mathbf{x}^{(2)}) \implies f(\mathbf{x}^{(1)}) = f(\mathbf{x}^{(2)}).
$$
Let $Z := T(\mathbf{I}_n) \subseteq \mathbb{R}^d$. Then there exists a unique function $g: Z \to \mathbb{R}^m$ such that $f = g \circ T$. We claim that $g$ is continuous on $Z$.
\end{lemma}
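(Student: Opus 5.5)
The plan is the standard closed-graph / sequential compactness argument. Existence and uniqueness of $g$ are already supplied by the two preceding lemmas, so only continuity of $g$ on $Z$ needs proof. Fix $\mathbf{z} \in Z$ and an arbitrary sequence $\mathbf{z}_k \in Z$ with $\mathbf{z}_k \to \mathbf{z}$. The goal is to show $g(\mathbf{z}_k) \to g(\mathbf{z})$, which suffices because $Z \subseteq \mathbb{R}^d$ is a metric space.

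Argue by contradiction. Suppose $g(\mathbf{z}_k) \not\to g(\mathbf{z})$. Then there exist $\epsilon > 0$ and a subsequence, still denoted $\mathbf{z}_k$, with $\|g(\mathbf{z}_k) - g(\mathbf{z})\| \ge \epsilon$ for all $k$. Since $Z = T(\mathbf{I}_n)$, choose preimages $\mathbf{x}_k \in \mathbf{I}_n$ with $T(\mathbf{x}_k) = \mathbf{z}_k$, so that $g(\mathbf{z}_k) = f(\mathbf{x}_k)$. Because $\mathbf{I}_n$ is compact, the Bolzano--Weierstrass theorem gives a further subsequence $\mathbf{x}_{k_j} \to \mathbf{x}^\ast \in \mathbf{I}_n$.

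Continuity of $T$ gives $T(\mathbf{x}^\ast) = \lim_j T(\mathbf{x}_{k_j}) = \lim_j \mathbf{z}_{k_j} = \mathbf{z}$. Hence $g(\mathbf{z}) = f(\mathbf{x}^\ast)$; this is exactly where the consistency hypothesis enters, through the well-definedness of $g$, since $\mathbf{x}^\ast$ need not be the preimage used to define $g(\mathbf{z})$. Continuity of $f$ then gives $g(\mathbf{z}_{k_j}) = f(\mathbf{x}_{k_j}) \to f(\mathbf{x}^\ast) = g(\mathbf{z})$. This contradicts $\|g(\mathbf{z}_{k_j}) - g(\mathbf{z})\| \ge \epsilon$, so $g$ is continuous at $\mathbf{z}$, and since $\mathbf{z}$ was arbitrary, on all of $Z$.

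The main obstacle is that the preimages are non-unique, so $T$ has no continuous inverse to invoke. Compactness of the domain $\mathbf{I}_n$ is what replaces an inverse: it lets us extract convergent preimage subsequences. The well-definedness of $g$ then makes the choice of limit preimage irrelevant. Equivalently, one could note that $T$ is a closed map from a compact space into a Hausdorff space, hence a quotient map onto $Z$, so continuity of $g$ follows from the universal property of quotients. I would present the sequential version, as it stays within the paper's elementary framework.
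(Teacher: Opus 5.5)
Your proof is correct and follows essentially the same route as the paper: argue by contradiction with a sequence $\mathbf{z}_k \to \mathbf{z}_0$ whose $g$-values stay $\varepsilon$-far from $g(\mathbf{z}_0)$, lift to preimages $\mathbf{x}_k$, extract a convergent subsequence by compactness of $\mathbf{I}_n$, and use continuity of $T$ and $f$ together with $f = g \circ T$ to get $g(\mathbf{z}_{k_j}) \to g(\mathbf{z}_0)$, a contradiction. Your explicit remark that the limit preimage $\mathbf{x}^\ast$ is admissible because $g$ is well defined is a point the paper uses only implicitly, and your quotient-map alternative is also valid but not needed.
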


\begin{proof} 
Suppose, for contradiction, that $g$ is discontinuous at some point $\mathbf{z}_0 \in Z$. Then there exists $\varepsilon_0 > 0$ and a sequence $\{\mathbf{z}_k\} \subset Z$ such that $\mathbf{z}_k \to \mathbf{z}_0$, but
$$
\|g(\mathbf{z}_k) - g(\mathbf{z}_0)\| \geq \varepsilon_0, \quad \forall k \in \mathbb{N}.
$$
Since $\mathbf{z}_k \in Z = T(\mathbf{I}_n)$, for each $k$, there exists $\mathbf{x}_k \in \mathbf{I}_n$ such that $T(\mathbf{x}_k) = \mathbf{z}_k$. As $\mathbf{I}_n$ is compact, the sequence $\{\mathbf{x}_k\}$ has a convergent subsequence $\{\mathbf{x}_{k_j}\}$ such that
$$
\mathbf{x}_{k_j} \to \mathbf{x}^* \in \mathbf{I}_n, \quad \text{as } j \to \infty.
$$
By continuity of $T$, we have
$$
T(\mathbf{x}_{k_j}) \to T(\mathbf{x}^*).
$$
But $T(\mathbf{x}_{k_j}) = \mathbf{z}_{k_j}$, and since $\mathbf{z}_k \to \mathbf{z}_0$, the subsequence $\mathbf{z}_{k_j} \to \mathbf{z}_0$. By uniqueness of limits,
$$
T(\mathbf{x}^*) = \mathbf{z}_0.
$$
By continuity of $f$,
$$
f(\mathbf{x}_{k_j}) \to f(\mathbf{x}^*).
$$
Note that $f(\mathbf{x}_{k_j}) = g(T(\mathbf{x}_{k_j})) = g(\mathbf{z}_{k_j})$, and $f(\mathbf{x}^*) = g(T(\mathbf{x}^*)) = g(\mathbf{z}_0)$. Therefore,
$$
g(\mathbf{z}_{k_j}) \to g(\mathbf{z}_0), \quad \text{as } j \to \infty.
$$
Thus, there exists $J \in \mathbb{N}$ such that for all $j \geq J$, $\|g(\mathbf{z}_{k_j}) - g(\mathbf{z}_0)\| < \varepsilon_0$. However, this contradicts the earlier assumption that
$$
\|g(\mathbf{z}_k) - g(\mathbf{z}_0)\| \geq \varepsilon_0, \quad \forall k,
$$
which in particular implies $\|g(\mathbf{z}_{k_j}) - g(\mathbf{z}_0)\| \geq \varepsilon_0$ for all $j$. Hence, the assumption of discontinuity is false. Therefore, $g$ is continuous at $\mathbf{z}_0$. Since $\mathbf{z}_0 \in Z$ was arbitrary, $g$ is continuous on $Z$.
\end{proof}

\section{Experimental Details}
\label{sec:exp_details}

To ensure reproducibility and address potential concerns regarding experimental completeness, we provide full implementation details below.

\paragraph{Synthetic Simulation.}
We generate 10,000 random input--output pairs as the training set and an additional 10,000 uniformly sampled pairs as the test set. The model is a 6-layer residual MLP with hidden dimension 30. We train for 300 epochs using standard optimization settings.

\paragraph{Image Classification.}
We evaluate on CIFAR-10, CIFAR-100, and Fashion-MNIST using both ViT and ResNet architectures.
\begin{itemize}
    \item For ViT, we use 12- and 24-layer variants with attention heads = 3, embedding dimension = 192, and MLP hidden size = 768, resulting in approximately 5.5M and 10.9M parameters, respectively.
    \item For ResNet, we adopt standard ResNet-18 and ResNet-34. Following the analysis in Appendix~\ref{sec:dim_change_proof}, our LPA framework naturally incorporates dimension-matching projections via the shared readout head, eliminating the need for additional ad-hoc projection modules.
\end{itemize}
All image classification models are trained for 300 epochs. We use a learning rate of $6 \times 10^{-4}$ for ResNet and $1 \times 10^{-4}$ for ViT.

\textbf{Train ImageNet From-scratch:} To validate our method under large-scale training, we train ViT models from scratch on ImageNet-1k for 300 epochs with an initial learning rate of $5 \times 10^{-4}$. The input image size is $224 \times 224$ with patch size 16. Specifically:
\begin{itemize}
    \item \textbf{ViT-12}: 12 layers, embedding dimension 768, 12 attention heads, FFN hidden dimension 3072.
    \item \textbf{ViT-24}: 24 layers, embedding dimension 1024, 16 attention heads, FFN hidden dimension 4096.
\end{itemize}

\textbf{Finetuning from pretrained checkpoints.}  
We finetune from publicly available DeiT pretrained weights on ImageNet-1k:
\begin{itemize}
    \item \textbf{ViT-Base}~\footnote{\url{https://dl.fbaipublicfiles.com/deit/deit_base_patch16_224-b5f2ef4d.pth}}: 12 layers, embedding dimension 768, 12 attention heads, FFN hidden dimension 3072.
    \item \textbf{ViT-Small}~\footnote{\url{https://dl.fbaipublicfiles.com/deit/deit_small_patch16_224-cd65a155.pth}}: 24 layers, embedding dimension 1024, 16 attention heads, FFN hidden dimension 4096.
\end{itemize}

We finetune for 30 epochs with a learning rate of $1 \times 10^{-5}$ (same as used in the main ViT experiments above).

\paragraph{Large Language Model Tasks.}
Due to computational constraints, we do not train LLMs from scratch. Instead, we initialize from the pretrained Qwen2-0.5B model (24 layers).
\begin{itemize}
    \item For classification benchmarks (CoLA, MNLI, MRPC, QNLI, QQP, SST-2), which impose relatively modest demands on model capacity compared to generation tasks, we apply LPA supervision to \textbf{all layers}.
    \item For generation tasks (Wikitext, Alpaca, Dolly-15k, GSM8K), which are more complex, we apply LPA supervision only to layers 20--24 (i.e., the top 5 layers).
\end{itemize}

In all LPA training setups, we assign a weight of 0.1 to losses from earlier layers and a weight of 1.0 to the final layer. This weighting scheme is motivated by the structural property of deep residual networks: as shown in Eq.~\eqref{eq:layer_rep_shh} and its generalization in Eq.~\eqref{eq:layer_rep_general}, the final prediction is a cumulative sum of layer-wise contributions, with the last layer directly shaping the output. Emphasizing the final layer ensures that the primary learning objective remains aligned with the E2E task performance, while auxiliary supervision on earlier layers provides regularization without compromising convergence.

We also note that LPA incurs negligible computational overhead compared to standard E2E training. Since LPA only reuses the final linear readout head at intermediate layers, adding a lightweight linear projection per layer, the additional cost is minimal relative to the full network computation. For example, on CIFAR-10 with a ViT-12 model trained on an NVIDIA A6000 GPU, both LPA and E2E take approximately 28 seconds per epoch.

\end{document}